\DeclareSymbolFontAlphabet{\amsmathbb}{AMSb}\newcommand{\R}{\amsmathbb{R}}
\newcommand{\N}{\amsmathbb{N}}
\renewcommand{\Pr}{\mathsf{Pr}}
\newcommand{\E}{\mathsf{E}}
\newcommand{\Ex}{\mathsf{E}}
\newcommand{\Lap}{\mathsf{Lap}}
\newcommand{\cP}{\amsmathbb{P}}
\newcommand{\Z}{Z}
\newcommand{\X}{X}
\newcommand{\Y}{Y}
\newcommand{\M}{\mathcal{M}}
\newcommand{\cM}{\mathcal{M}}
\newcommand{\cN}{\mathcal{N}}
\renewcommand{\S}{\mathcal{S}}
\newcommand{\cS}{\mathcal{S}}
\newcommand{\tL}{\tilde{L}}
\newcommand{\tphi}{\tilde{\varphi}}
\newcommand{\rvX}{\mathbf{z}}
\newcommand{\rvY}{\mathbf{z}'}
\newcommand{\TV}{\mathsf{TV}}
\newcommand{\btwo}{\mathbb{2}}
\newtheorem{theorem}{Theorem}
\newtheorem{lemma}[theorem]{Lemma}
\newcommand{\supp}{\mathsf{supp}}
\newcommand{\norm}[1]{\| #1 \|}
\newcommand{\cU}{\mathcal{U}}
\newcommand{\Swo}{\S^{\mathsf{wo}}}
\newcommand{\Swr}{\S^{\mathsf{wr}}}
\newcommand{\Spo}{\S^{\mathsf{po}}}
\newcommand{\cR}{\mathcal{R}}
\newcommand{\one}{\amsmathbb{I}}
\title{Privacy Amplification by Subsampling: Tight Analyses via Couplings and Divergences}
\date{}
\author[1]{Borja Balle\footnote{Corresponding e-mail: \url{pigem@amazon.co.uk}}}
\affil[1]{Amazon Research, Cambridge, UK}
\author[2]{Gilles Barthe}
\affil[2]{IMDEA Software Institute, Madrid, Spain}
\author[3]{Marco Gaboardi}
\affil[3]{University at Buffalo (SUNY), Buffalo, USA}
\begin{document}
\maketitle

\begin{abstract}
  Differential privacy comes equipped with multiple analytical tools
  for the design of private data analyses. One important tool is the
  so-called ``privacy amplification by subsampling'' principle, which
  ensures that a differentially private mechanism run on a random
  subsample of a population provides higher privacy guarantees than
  when run on the entire population. Several instances of this
  principle have been studied for different random subsampling
  methods, each with an ad-hoc analysis.
  In this paper we present a general method that recovers and
  improves prior analyses, yields lower bounds and derives new
  instances of privacy amplification by subsampling. Our method
  leverages a characterization of differential privacy as a divergence
  which emerged in the program verification community. Furthermore, it
  introduces new tools, including advanced joint convexity
  and privacy profiles, which might be of independent interest.
\end{abstract}

\section{Introduction}
Subsampling is a fundamental tool in the design and analysis of
differentially private mechanisms. Broadly speaking, the intuition
behind the ``privacy amplification by subsampling'' principle  is
that the privacy guarantees of a differentially private mechanism can
be amplified by applying it to a small random subsample of records
from a given dataset. In machine learning, many classes of algorithms
involve sampling operations, e.g. stochastic optimization methods and
Bayesian inference algorithms, and it is not surprising that results quantifying
the privacy amplification obtained via subsampling play a key role in
designing differentially private versions of these learning algorithms
\citep{bassily2014private,wang2015privacy,abadi2016deep,DBLP:conf/uai/JalkoHD17,DBLP:journals/corr/ParkFCW16b,DBLP:journals/corr/ParkFCW16a}. 
Additionally, from a practical standpoint subsampling provides a straightforward method to obtain privacy amplification when the final mechanism is only available as a black-box. For example, in Apple's iOS and Google's Chrome deployments of differential privacy for data collection the privacy parameters are hard-coded into the implementation and cannot be modified by the user. In this type of settings, if the default privacy parameters are not satisfactory one could achieve a stronger privacy guarantee by devising a strategy that only submits to the mechanism a random sample of the data.

Despite the practical importance of subsampling, existing tools to bound privacy
amplification only work for specific forms of subsampling and
typically come with cumbersome proofs providing no information about the tightness of the resulting bounds. In this paper we remedy this situation by providing a general
framework for deriving tight privacy amplification results that can be
applied to any of the subsampling strategies considered in the
literature. Our framework builds on a characterization of
differential privacy in terms of
$\alpha$-divergences~\citep{barthe2013beyond}. This characterization has
been used before for program
verification~\citep{BartheKOB12,BartheGGHS16}, while we use it here for
the first time in the context of 
algorithm analysis. In order to do this, we develop several novel
analytical tools, including \emph{advanced joint convexity} -- a
property of $\alpha$-divergence with respect to mixture distributions --
and \emph{privacy profiles} -- a general tool describing the privacy guarantees
that private algorithms provide. 

One of our motivations to initiate a systematic study of privacy amplification by subsampling
is that this is an important primitive for the design of
differentially private algorithms which has received less attention
than other building blocks like composition theorems
\citep{dwork2010boosting,kairouz2017composition,murtagh2016complexity}. Given
the relevance of sampling operations in machine learning, it is
important to understand what are the limitations of privacy
amplification and develop a fine-grained understanding of its
theoretical properties. Our results provide a first step in this
direction by showing how privacy amplification resulting from different sampling techniques can be
analyzed by means of single set of tools, and by showing how these tools can
be used for proving lower bounds. Our analyses also highlight the importance of choosing a sampling technique that is well-adapted to the notion of neighbouring datasets under consideration.
A second motivation is that subsampling provides a natural example of
mechanisms where the output distribution is a mixture. Because
mixtures have an additive structure and differential privacy is
defined in terms of a multiplicative guarantee, analyzing the privacy
guarantees of mechanisms whose output distribution is a mixture is in
general a challenging task. Although our analyses are specialized to
mixtures arising from subsampling, we believe the tools we develop in
terms of couplings and divergences will also be useful to analyze other types of mechanisms involving mixture distributions.
Finally, we want to remark that privacy amplification results also play a role in analyzing the generalization and sample complexity properties of private learning algorithms \citep{kasiviswanathan2011can,beimel2013characterizing,bun2015differentially,wang2016learning}; an in-depth understanding of the interplay between sampling and differential privacy might also have applications in this direction.

\section{Problem Statement and Methodology Overview}

A \emph{mechanism} $\cM : X \to \cP(Z)$ with input space $X$ and output space $Z$ is a randomized
algorithm that on input $x$ outputs a sample from
the distribution $\cM(x)$ over $Z$. Here $\cP(Z)$ denotes the set of
probability measures on the output space $Z$. We implicitly assume $Z$
is equipped with a sigma-algebra of measurable subsets and a base
measure, in which case $\cP(Z)$ is restricted to probability measures
that are absolutely continuous with respect to the base measure. In
most cases of interest $Z$ is either a discrete space equipped with
the counting measure or an Euclidean space equipped with the Lebesgue
measure. We also assume $X$ is equipped with a binary symmetric
relation $\simeq_X$ defining the notion of neighbouring inputs.

Let $\varepsilon \geq 0$ and $\delta \in [0,1]$. A mechanism $\cM$ is
said to be \emph{$(\varepsilon,\delta)$-differentially private}
w.r.t.\, $\simeq_X$ if for every pair of inputs $x \simeq_X x'$
and every measurable subset $E \subseteq Z$ we have
\begin{align}
\Pr[\cM(x) \in E] \leq e^{\varepsilon} \Pr[\cM(x') \in E] + \delta \enspace.
\end{align}
For our purposes, it will be more convenient to express differential
privacy in terms of $\alpha$-divergences\footnote{Also known in the
literature as \emph{elementary
divergences} \citep{osterreicher2002csiszar} and \emph{hockey-stick
divergences} \citep{sason2016f}.}. Concretely, the $\alpha$-divergence ($\alpha \geq 1$) between two probability
measures $\mu, \mu' \in \cP(Z)$ is defined as\footnote{Here $d \mu /
d \mu'$ denotes the Radon-Nikodym derivative between $\mu$ and
$\mu'$. In particular, if $\mu$ and $\mu'$ have densities $p = d\mu /
d\nu$ and $p' = d\mu' / d\nu$ with respect to some base measure $\nu$,
then $d \mu / d \mu' = p / p'$.}
\begin{align}
D_{\alpha}(\mu \| \mu') = \sup_{E} \left(\mu(E) - \alpha \mu'(E)\right) =
\int_Z \left[\frac{d \mu}{d \mu'}(z) - \alpha\right]_+ d \mu'(z)
= 
\sum_{z \in Z} [\mu(z) - \alpha \mu'(z)]_+ \enspace,
\end{align}
where $E$ ranges over all measurable subsets of $Z$, $[\bullet]_+
= \max\{\bullet,0\}$, and the last equality is a specialization for
discrete $Z$. It is easy to see~\citep{barthe2013beyond} that $\M$ is
$(\varepsilon,\delta)$-differentially private if and only if
$D_{e^\varepsilon}(\M (x) \| \M (x'))\leq\delta$ for every $x$ and
$x'$ such that $x\simeq_X x'$.

In order to emphasize the relevant properties of $\cM$ from a privacy amplification point of
view, we introduce the concepts of \emph{privacy profile} and \emph{group-privacy
profiles}. The privacy profile $\delta_\cM$ of a mechanism $\cM$ is a
function associating to each privacy parameter $\alpha = e^{\varepsilon}$ a bound on the
$\alpha$-divergence between the results of running the mechanism on
two adjacent datasets, i.e.  $\delta_{\cM}(\varepsilon) = \sup_{x \simeq_X
x'} D_{e^{\varepsilon}}(\cM(x) \| \cM(x')) $ (we will discuss the properties of
this tool in more details in the next section).  Informally speaking,
the privacy profile represents the set of all of privacy parameters
under which a mechanism provides differential privacy.  In particular,
recall that an $(\varepsilon,\delta)$-DP mechanism $\cM$ is also
$(\varepsilon',\delta')$-DP for any $\varepsilon' \geq \varepsilon$
and any $\delta' \geq \delta$. The privacy profile $\delta_{\cM}$
defines a curve in $[0,\infty) \times [0,1]$ that separates the space
of privacy parameters into two regions: the ones for which $\cM$
satisfies differential privacy and the ones for which it does
not. This curve exists for every mechanism $\cM$, even for mechanisms
that satisfy pure DP for some value of $\varepsilon$. When the mechanism is clear from the context we might slightly abuse our notation and write $\delta(\varepsilon)$ or $\delta$ for the corresponding privacy profile.
To define group-privacy profiles $\delta_{\cM,k}$ ($k \geq 1$)
we use the path-distance $d$ induced by $\simeq_X$:
\begin{align*}
d(x,x') = \min \{ k : \exists x_1, \ldots, x_{k-1}, x \simeq_X x_1,
x_1 \simeq_X x_2, \ldots, x_{k-1} \simeq_X x' \} \enspace.
\end{align*}
With this notation, we define $\delta_{\cM,k}(\varepsilon)
= \sup_{d(x,x') \leq k} D_{e^{\varepsilon}}(\cM(x) \| \cM(x'))$. Note that
$\delta_{\cM} = \delta_{\cM,1}$.

\paragraph*{Problem Statement} A well-known method for
increasing privacy of a mechanism is to apply the mechanism to a
random subsample of the input database, rather than on the database
itself. Intuitively, the method decreases the chances of leaking
information about a particular individual because nothing about that
individual can be leaked in the cases where the individual is not
included in the subsample. The question addressed in this paper is to
devise methods for quantifying amplification and for proving
optimality of the bounds. This turns out to be a surprisingly subtle
problem.

Formally, let $\X$ and $\Y$ be two sets equipped with neighbouring relations
$\simeq_X$ and $\simeq_Y$ respectively. We assume that both $\X$ and
$\Y$ contain databases (modelled as sets, multisets, or tuples) over
a universe $\cU$ that represents all possible records contained in a
database. A subsampling mechanism is a randomized algorithm $\S :
\X \to \cP(\Y)$ that takes as input a database $x$ and outputs a
finitely supported distribution over datasets. Note that we find it
convenient to distinguish between $X$ and $Y$ because $x$ and $y$
might not always have the same type. For example, sampling with
replacement from a set $x$ yields a multiset $y$.

The problem of privacy amplification can now be stated as follows: let
$\M : \Y \to \cP(\Z)$ be a mechanism with privacy profile $\delta_{\cM}$ with respect to $\simeq_Y$, and
let $\S$ be a subsampling mechanism. Consider the subsampled mechanism $\M^{\S}
: \X \to \cP(\Z)$ given by $\M^{\S}(x) = \M(\S(x))$, where the composition notation means we feed a sample from $\S(x)$ into $\cM$.
The goal is to relate the privacy profiles of $\M$ and $\M^\S$, via an
inequality of the form: for every $\varepsilon \geq 0$, there exists
$0 \leq \varepsilon' \leq \varepsilon$ such that $\delta_{\M^\S}(\varepsilon') \leq
h(\delta_{\M}(\varepsilon))$, where $h$ is some function to be determined. In terms of differential privacy, one can be read as saying that if $\M$ is $(\varepsilon,\delta)$-DP, them the subsampled mechanism $\M^{\S}$ is $(\varepsilon',h(\delta))$-DP for some $\varepsilon' \leq \varepsilon$. This is a privacy amplification statement because the new mechanism has better privacy parameters than the original one.

A full specification of this problem requires formalizing the
following three ingredients: (i)
\emph{dataset representation} specifying whether the inputs to the
mechanism are sets, multisets, or tuples; (ii) \emph{neighbouring
relations} in $X$ and $Y$, including the usual remove/add-one
$\simeq_{r}$ and substitute-one $\simeq_{s}$ relations; (iii) \emph{subsampling method} and its parameters, with the most commonly used being subsample without replacement, subsampling with replacement, and Poisson subsampling.

Regardless of the specific setting being considered, the main
challenge in the analysis of privacy amplification by subsampling
resides in the fact that the output distribution of the mechanism $\mu
= \cM^{\cS}(x) \in \cP(Z)$ is a \emph{mixture distribution}.  In particular,
writing $\mu_y = \cM(y) \in \cP(Z)$ for any $y \in Y$ and taking $\omega
= \cS(x) \in \cP(Y)$ to be the (finitely supported) distribution over
subsamples from $x$ produced by the subsampling mechanism, we can
write $\mu = \sum_y \omega(y) \mu_y = \omega M$, where $M$ denotes the
Markov kernel operating on measures defined by $\cM$.  Consequently,
proving privacy amplifications results requires reasoning about the
mixtures obtained when sampling from two neighbouring datasets
$x \simeq_X x'$, and how the privacy parameters are affected by the mixture.

\begin{table}
\renewcommand{\arraystretch}{1.6}
\newcolumntype{C}{>{\centering\arraybackslash}X}{\small
\begin{tabularx}{\textwidth}{ccccCcc}
\toprule
Subsampling & $\simeq_Y$ & $\simeq_X$ & $\eta$ & $\delta'$ & Theorem \\
\midrule
Poisson($\gamma$) & R & R & $\gamma$ & $\gamma \delta$ & \ref{thm:po} \\
WOR($n$,$m$) & S & S & $\frac{m}{n}$ & $\frac{m}{n} \delta$ & \ref{thm:wo} \\
WR($n$,$m$) & S & S & $1 - \left(1 - \frac{1}{n}\right)^m$ & $\sum_{k=1}^m \binom{m}{k} \left(\frac{1}{n}\right)^k \left(1 - \frac{1}{n}\right)^{m-k} \delta_k$ & \ref{thm:wr} \\
WR($n$,$m$) & S & R & $1 - \left(1 - \frac{1}{n}\right)^m$ & $\sum_{k=1}^m \binom{m}{k} \left(\frac{1}{n}\right)^k \left(1 - \frac{1}{n}\right)^{m-k} \delta_k$ & \ref{thm:hybrid} \\
\bottomrule
\end{tabularx}
}
\vspace*{.7em}
\caption{Summary of privacy amplification bounds. Amplification parameter $\eta$: $e^{\varepsilon'} = 1 + \eta (e^\varepsilon -1)$. Types of subsampling: without replacement (WOR) and with replacement (WR). Neighbouring relations: remove/add-one (R) and substitute one (S).}\label{tab:results}
\end{table}

\paragraph{Our Contribution}
We provide a unified method for deriving privacy amplification by subsampling bounds (Section~\ref{sec:tools}).
Our method recovers all existing results in the literature and allow us to derive novel amplification bounds (Section~\ref{sec:examples}). In most cases our method also provides optimal constants which are shown to be tight by a generic lower bound (Section~\ref{sec:lb}).
Our analysis relies on properties of divergences and
privacy profiles, together with two additional ingredients.

The first ingredient is a novel \emph{advanced joint convexity}
property providing upper bounds on the
$\alpha$-divergence between overlapping mixture distributions. In the specific context
of differential privacy this result yields for every $x \simeq_X x'$:
\begin{align}\label{eqn:adj-preview}
D_{e^{\varepsilon'}}(\M^{\S} (x) \| \M^{\S} (x')) \leq \eta \cdot \left( (1-\beta) D_{e^{\varepsilon}}(\mu_1 \| \mu_0) + \beta
D_{e^{\varepsilon}} (\mu_1 \| \mu_1') \right) \enspace,
\end{align}
for $e^{\varepsilon'} = 1 + \eta (e^{\varepsilon} - 1)$, some $\beta
\in [0,1]$, and $\eta = \TV(\cS(x),\cS(x'))$ being the total
variation distance between the distributions over subsamples. Here $\mu_0,\mu_1,\mu'_1 \in \cP(Z)$ are suitable measures obtained from $\M^{\S}(x)$ and $\M^{\S}(x')$ through a coupling and projection operation. In particular, the proof of advanced
joint convexity uses ideas from probabilistic couplings, and more
specifically the maximal coupling construction (see Theorem~\ref{thm:ajc} and its proof for more details).
It is also interesting to note that the non-linear relation $\varepsilon' = \log(1 +
\eta (e^{\varepsilon} - 1))$ already appears in some existing
privacy amplification results (e.g.~\cite{li2012sampling}). Although for small $\varepsilon$ and $\eta$ this relation yields $\varepsilon' = O(\eta \varepsilon)$, our results show that the more complicated non-linear relation is in fact a fundamental aspect of privacy amplification by subsampling.

The second ingredient in our analysis establishes an upper bound for
the divergences occurring in the right hand side of \eqref{eqn:adj-preview}
in terms of group-privacy profiles. It states that under suitable conditions,
we have $D_{e^{\varepsilon}}(\nu M \| \nu' M) \leq \sum_{k\geq 1} \lambda_k(\nu) \delta_{\cM,k}(e^{\varepsilon})$
for suitable choices of $\lambda_k$. Again, the proof of the
inequality uses tools from probabilistic couplings.

The combination of these results yields a bound of the privacy profile
of $\M^{\S}$ as a function of the group-privacy profiles of
$\M$. Based on this inequality, we will establish several privacy
amplification result and prove tightness results.  This methodology
can be applied to any of the settings discussed above in terms of
dataset representation, neighbouring relation, and type of
subsampling. Table~\ref{tab:results} summarizes several results that can be obtained with our method (see Section~\ref{sec:examples} for details). The supplementary material also contains plots illustrating our bounds (Figure~\ref{fig:profiles}) and proofs of all the results presented in the paper.

\section{Tools: Couplings, Divergences and Privacy Profiles}\label{sec:tools}
We next introduce several tools that will be used to support our
analyses. The first and second tools are known, whereas the remaining
tools are new and of independent interest.

\paragraph*{Divergences}
The following
characterization follows immediately from the definition of
$\alpha$-divergence in terms of the supremum over $E$.
\begin{theorem}[\citep{barthe2013beyond}]
A mechanism $\cM$ is $(\varepsilon,\delta)$-differentially private
with respect to $\simeq_X$ if and only if $\sup_{x \simeq_X x'}
D_{e^{\varepsilon}}(\cM(x) \| \cM(x')) \leq \delta$.
\end{theorem}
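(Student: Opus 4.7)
The statement is a definitional equivalence between the standard $(\varepsilon,\delta)$-DP inequality and a variational characterization of the $\alpha$-divergence, so my plan is essentially to unfold both definitions and observe that they coincide term-by-term once $\alpha$ is set to $e^{\varepsilon}$. I would present two short directions, each a one-line manipulation, after reminding the reader of the variational form $D_{\alpha}(\mu\|\mu') = \sup_E (\mu(E) - \alpha\,\mu'(E))$ given earlier in the excerpt.

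For the forward direction, I would fix an arbitrary pair $x \simeq_X x'$ and an arbitrary measurable set $E \subseteq Z$. The $(\varepsilon,\delta)$-DP hypothesis gives
\[
\Pr[\cM(x) \in E] - e^{\varepsilon}\Pr[\cM(x') \in E] \leq \delta.
\]
Since this bound is uniform in $E$, taking the supremum over measurable sets on the left yields, by the variational formula, $D_{e^{\varepsilon}}(\cM(x)\|\cM(x')) \leq \delta$. Taking the supremum over neighbouring pairs preserves the inequality and gives the desired bound on $\sup_{x\simeq_X x'} D_{e^{\varepsilon}}(\cM(x)\|\cM(x'))$.

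For the converse, I would reverse the chain: assuming $\sup_{x\simeq_X x'} D_{e^{\varepsilon}}(\cM(x)\|\cM(x')) \leq \delta$, then for any fixed neighbouring pair and any measurable $E$, the variational formula yields
\[
\Pr[\cM(x) \in E] - e^{\varepsilon}\Pr[\cM(x') \in E] \;\leq\; D_{e^{\varepsilon}}(\cM(x)\|\cM(x')) \;\leq\; \delta,
\]
which after rearranging is exactly the $(\varepsilon,\delta)$-DP inequality.

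There is no real obstacle here; the only thing worth double-checking is that the supremum in the variational definition is actually attained (or arbitrarily well approached) on measurable sets, so that no strict-inequality gap appears between the two characterizations. This is guaranteed because the optimizing event $E^{*} = \{z : d\mu/d\mu'(z) > e^{\varepsilon}\}$ is measurable by construction, and plugging $E^{*}$ into the DP condition gives back precisely $D_{e^{\varepsilon}}(\mu\|\mu')$. Thus both implications are tight and the equivalence holds.
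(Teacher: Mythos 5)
Your proof is correct and matches the paper's approach: the paper explicitly notes that this characterization ``follows immediately from the definition of $\alpha$-divergence in terms of the supremum over $E$,'' which is precisely the two-line unfolding you carry out. Both directions are sound, and your remark about the optimizing event $E^{*}$ being measurable correctly addresses the only point where one might worry about a gap.
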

Note that in the statement of the theorem we take $\alpha =
e^{\varepsilon}$. Throughout the paper we sometimes use these two notations
interchangeably to make expressions more compact.

We now state consequences of the definition of
$\alpha$-divergence: (i) $0 \leq D_{\alpha}(\mu \| \mu') \leq 1$; (ii)
the function $\alpha \mapsto D_{\alpha}(\mu \| \mu')$ is monotonically
decreasing; (iii) the function $(\mu,\mu') \mapsto
D_{\alpha}(\mu \| \mu')$ is jointly convex. Furthermore, one can show
that $\lim_{\alpha \to \infty} D_{\alpha}(\mu \| \mu') = 0$ if and only if $\supp(\mu) \subseteq \supp(\mu')$.

\paragraph*{Couplings}
Couplings are a standard tool for deriving upper bounds for the
statistical distance between distributions. Concretely, it is
well-known that the total variation distance between two distributions
$\nu, \nu' \in \cP(Y)$ satisfies $\TV (\nu,\nu') \leq \Pr_{\pi}[y \neq
y']$ for any coupling $\pi$, where equality is attained by taking the
so-called maximal coupling. We recall the definition of coupling and
provide a construction of the maximal coupling, which we shall use in
later sections.

A coupling between two distributions $\nu, \nu' \in \cP(Y)$ is a
distribution $\pi \in \cP(Y \times Y)$ whose marginals along the
projections $(y,y') \mapsto y$ and $(y,y') \mapsto y'$ are $\nu$ and
$\nu'$ respectively. Couplings always exist, and furthermore, there
exists a maximal coupling, which exactly characterizes the total variation
distance between $\nu$ and $\nu'$. Let $\nu_0
(y)= \min\{\nu(y),\nu'(y)\}$ and let $\eta =
\TV (\nu,\nu') = 1 - \sum_{y \in Y} \nu_0(y)$, where $\TV$ denotes the
total variation distance.
The maximal coupling between $\nu$ and
$\nu'$ is defined as the mixture $\pi =
(1-\eta) \pi_0 + \eta \pi_1$, where $\pi_0 (y,y')=
\nu_0(y) \mathbb{1}[y=y'] /(1-\eta)$, and
$\nu_1 (y,y')=(\nu(y)-\nu_0(y)) (\nu'(y')-\nu_0(y'))/\eta$. Projecting the maximal coupling along the marginals yields the overlapping mixture decompositions $\nu = (1-\eta) \nu_0 + \eta \nu_1$ and $\nu' = (1-\eta) \nu_0 + \eta \nu_1'$.

\paragraph{Advanced Joint Convexity}
The privacy amplification phenomenon is tightly connected to an
interesting new form of joint convexity for $\alpha$-divergences, which we call advanced joint
convexity.

\begin{theorem}[Advanced Joint Convexity of $D_{\alpha}$\footnote{Proofs of all our results are presented in the appendix.}]\label{thm:ajc}
Let $\mu, \mu' \in \cP(Z)$ be measures satisfying $\mu = (1-\eta) \mu_0 + \eta \mu_1$ and $\mu' = (1-\eta) \mu_0 + \eta \mu_1'$ for some $\eta$, $\mu_0$, $\mu_1$, and $\mu_1'$.
Given $\alpha \geq 1$, let $\alpha' = 1 + \eta (\alpha-1)$ and
$\beta = \alpha' / \alpha$. Then the following holds:
\begin{align}\label{eqn:ajc}
D_{\alpha'}(\mu \| \mu') = \eta D_{\alpha}(\mu_1 \| (1-\beta) \mu_0 + \beta \mu_1') \enspace.
\end{align}
\end{theorem}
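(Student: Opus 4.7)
My plan is to prove the identity by direct manipulation of the integral (or sum) form of the $\alpha$-divergence, since the hypotheses give explicit mixture decompositions of $\mu$ and $\mu'$ that share the common component $(1-\eta)\mu_0$. The key algebraic observation I want to exploit is the pair of identities
\begin{align*}
\alpha \beta = \alpha', \qquad \alpha(1-\beta) = (1-\eta)(\alpha-1),
\end{align*}
both of which follow immediately from $\alpha' = 1 + \eta(\alpha-1)$ and $\beta = \alpha'/\alpha$. (Note also that $\beta \in [1/\alpha, 1]$ since $1 \leq \alpha' \leq \alpha$, so $(1-\beta)\mu_0 + \beta\mu_1'$ is a bona fide probability measure and the right-hand side of \eqref{eqn:ajc} is well-defined.)

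First I would start from the representation $D_{\alpha'}(\mu\|\mu') = \int [\mu(z) - \alpha'\mu'(z)]_+\,d\nu(z)$ with respect to a common dominating base measure $\nu$. Plugging in the mixture decompositions gives an integrand of the form $[(1-\eta)(1-\alpha')\mu_0 + \eta\mu_1 - \eta\alpha'\mu_1']_+$. The coefficient of $\mu_0$ simplifies via $1 - \alpha' = -\eta(\alpha-1)$ to $-\eta(1-\eta)(\alpha-1)$, and factoring $\eta$ out of the $[\cdot]_+$ (valid because $\eta \geq 0$) yields
\begin{align*}
D_{\alpha'}(\mu\|\mu') = \eta \int \bigl[\mu_1(z) - (1-\eta)(\alpha-1)\mu_0(z) - \alpha'\mu_1'(z)\bigr]_+\,d\nu(z).
\end{align*}

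Next I would compute the target divergence on the right of \eqref{eqn:ajc} the same way:
\begin{align*}
\eta D_{\alpha}\bigl(\mu_1 \,\|\, (1-\beta)\mu_0 + \beta\mu_1'\bigr) = \eta \int \bigl[\mu_1(z) - \alpha(1-\beta)\mu_0(z) - \alpha\beta\mu_1'(z)\bigr]_+\,d\nu(z).
\end{align*}
The two identities $\alpha\beta = \alpha'$ and $\alpha(1-\beta) = (1-\eta)(\alpha-1)$ now show that the two integrands coincide pointwise, establishing \eqref{eqn:ajc}.

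There is no real obstacle here beyond bookkeeping; the only subtlety is in choosing the form $D_\alpha(\mu\|\mu') = \int [\mu - \alpha\mu']_+$ rather than the supremum form, since this avoids any case analysis over measurable sets and reduces the claim to matching coefficients. The name ``advanced joint convexity'' is motivated by the fact that this identity will later be combined with the maximal coupling construction of the previous paragraph to produce overlapping mixture decompositions $\mu = \M^\S(x)$ and $\mu' = \M^\S(x')$ with $\eta = \TV(\S(x),\S(x'))$, turning \eqref{eqn:ajc} into the amplification inequality \eqref{eqn:adj-preview} after applying ordinary joint convexity of $D_\alpha$ to the second argument.
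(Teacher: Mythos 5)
Your proof is correct and follows essentially the same route as the paper's: both reduce the claim to the pointwise identity $[\mu(z) - \alpha'\mu'(z)]_+ = \eta\,[\mu_1(z) - \alpha((1-\beta)\mu_0(z) + \beta\mu_1'(z))]_+$ and integrate. The paper merely asserts this identity, whereas you verify it explicitly via $\alpha\beta = \alpha'$ and $\alpha(1-\beta) = (1-\eta)(\alpha-1)$, which is the only nontrivial content of the argument.
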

Note that writing $\alpha = e^{\varepsilon}$ and $\alpha' =
e^{\varepsilon'}$ in the above lemma we get the relation $\varepsilon'
= \log(1 + \eta(e^{\varepsilon} - 1))$. Applying standard joint convexity to the right hand side above we conclude:
$D_{\alpha'}(\mu \| \mu') \leq (1-\beta) \eta D_{\alpha}(\mu_1 \|  \mu_0)
+ \beta \eta D_{\alpha}(\mu_1 \| \mu_1')$.
Note that applying joint convexity directly on $D_{\alpha'}(\mu \| \mu')$ instead of advanced joint
complexity yields a weaker bound which implies amplification for the
$\delta$ privacy parameter, but not for the $\varepsilon$ privacy
parameter.

When using advanced joint convexity to analyze privacy amplification we consider two elements $x$ and $x'$ and fix the following notation.
Let $\omega=\S(x)$ and $\omega'=\S(x')$ and $\mu=\omega M$ and
$\mu'=\omega' M$, where we use the notation $M$ to denote the Markov
kernel associated with mechanism $\cM$ operating on measures over $Y$.
We then consider the mixture factorization of $\omega$ and $\omega'$ obtained by taking the
decompositions induced by projecting the \emph{maximal coupling} $\pi = (1 - \eta) \pi_0 + \eta \pi_1$ on the first and second marginals: $\omega = (1-\eta) \omega_0 + \eta \omega_1$ and
$\omega' = (1-\eta) \omega_0 + \eta \omega_1'$. It is easy to see from the construction of the maximal coupling that $\omega_1$ and $\omega_1'$ have disjoint supports and $\eta$ is the smallest probability such that this condition holds.
In this way we obtain the canonical mixture decompositions $\mu = (1-\eta) \mu_0 + \eta \mu_1$ and
$\mu' = (1-\eta) \mu_0 + \eta \mu_1'$, where $\mu_0 = \omega_0 M$,
$\mu_1 = \omega_1 M$ and $\mu_1' = \omega_1' M$.

\paragraph{Privacy Profiles}
We state some important properties of privacy profiles. Our first result illustrates our claim that the
``privacy curve'' exists for every mechanism $\cM$ in the context of the Laplace
output perturbation mechanism.

\begin{theorem}\label{thm:profileLap}
Let $f : X \to \R$ be a function with global sensitivity $\Delta = \sup_{x \simeq_X x'} |f(x) - f(x')|$. Suppose $\cM(x) = f(x) + \Lap(b)$ is a Laplace output perturbation mechanism with noise parameter $b$. The privacy profile of $\cM$ is given by $\delta_{\cM}({\varepsilon}) = [1 - \exp(\frac{\varepsilon - \theta}{2})]_+$, where $\theta = \Delta / b$.
\end{theorem}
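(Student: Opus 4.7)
The plan is to compute the divergence $D_{e^{\varepsilon}}(\cM(x) \| \cM(x'))$ for an arbitrary neighbouring pair and then take the supremum. By the characterization recalled right before the theorem, $\delta_{\cM}(\varepsilon) = \sup_{x \simeq_X x'} D_{e^{\varepsilon}}(\cM(x) \| \cM(x'))$, so this suffices. Since the Laplace distribution is translation-equivariant and the integrand in the divergence formula depends only on the densities, the value of $D_{e^{\varepsilon}}(\cM(x)\|\cM(x'))$ depends on $x,x'$ only through $t = f(x) - f(x')$. A short monotonicity argument (the ratio of densities is pointwise increasing in $|t|$ in the region that matters) will show that the supremum is attained at $|t| = \Delta$, so I can reduce without loss of generality to the canonical instance $f(x) = 0$, $f(x') = \Delta$ on $Z = \R$.

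Next I would write out the two densities explicitly as $p(z) = \frac{1}{2b} e^{-|z|/b}$ and $p'(z) = \frac{1}{2b} e^{-|z-\Delta|/b}$, and determine the super-level set
\[
E^* = \{z \in \R : p(z) \geq \alpha\, p'(z)\} = \{z : |z-\Delta| - |z| \geq \varepsilon b\},
\]
with $\alpha = e^{\varepsilon}$. A case split on the sign of $z$ and of $z - \Delta$ yields three regions: on $z \le 0$ the inequality reads $\Delta \ge \varepsilon b$, on $0 \le z \le \Delta$ it reads $z \le (\Delta - \varepsilon b)/2$, and on $z \ge \Delta$ it is never satisfied. This immediately gives the dichotomy: if $\varepsilon \geq \theta = \Delta/b$ then $E^*$ is empty (up to a null set) and the divergence vanishes, recovering the pure-DP regime and matching $[1 - e^{(\varepsilon-\theta)/2}]_+ = 0$; if $\varepsilon < \theta$ then $E^* = (-\infty, (\Delta - \varepsilon b)/2]$.

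In the remaining regime $\varepsilon < \theta$, I would finish by evaluating $\int_{E^*}(p(z) - \alpha p'(z))\, dz$ using the Laplace CDF. Writing $c = (\Delta - \varepsilon b)/2 > 0$, the integral of $p$ over $(-\infty, c]$ equals $1 - \tfrac{1}{2} e^{-c/b} = 1 - \tfrac{1}{2} e^{-(\theta-\varepsilon)/2}$, and the integral of $p'$ (after the substitution $z \mapsto z - \Delta$, noting $c - \Delta < 0$) equals $\tfrac{1}{2} e^{-(\theta+\varepsilon)/2}$. Multiplying the latter by $\alpha = e^{\varepsilon}$ and subtracting, the two exponential contributions combine into $e^{(\varepsilon-\theta)/2}$, yielding $\delta_{\cM}(\varepsilon) = 1 - e^{(\varepsilon-\theta)/2}$, which is exactly $[1 - e^{(\varepsilon-\theta)/2}]_+$ in the relevant range.

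There is no real obstacle here; the only thing that requires care is the correct identification of $E^*$ via the three-region case analysis (and checking that the worst case among all $x \simeq_X x'$ is indeed $|t| = \Delta$, which follows because enlarging $|t|$ stretches the set where $p/p' \ge \alpha$). Everything else reduces to evaluating the Laplace CDF on the resulting half-line and simplifying the exponents.
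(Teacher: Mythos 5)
Your proposal is correct and follows essentially the same route as the paper's proof: reduce to the canonical pair $f(x)=0$, $f(x')=\Delta$, identify the super-level set $\{z : |z-\Delta|-|z| \geq \varepsilon b\} = (-\infty,(\Delta-\varepsilon b)/2]$ (empty for $\varepsilon \geq \theta$), and evaluate the two resulting Laplace CDF terms to obtain $1 - e^{(\varepsilon-\theta)/2}$. The only difference is that you spell out the worst-case reduction over $|f(x)-f(x')|$ and the three-region case split slightly more explicitly than the paper does; both computations agree.
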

The well-known fact that the Laplace mechanism with $b \geq \Delta / \varepsilon$ is $(\varepsilon,0)$-DP follows from this result by noting that $\delta_{\cM}({\varepsilon}) = 0$ for any $\varepsilon \geq \theta$. However, Theorem~\ref{thm:profileLap} also provides more information: it shows that for $\varepsilon < \Delta/b$ the Laplace mechanism with noise parameter $b$ satisfies $(\varepsilon,\delta)$-DP with $\delta = \delta_{\cM}({\varepsilon})$.

For mechanisms that only satisfy approximate DP, the privacy profile provides information about the behaviour of $\delta_{\cM}({\varepsilon})$ as we increase $\varepsilon \to \infty$. The classical analysis for the Gaussian output perturbation mechanism provides some information in this respect. Recall that for a function $f : X \to \R^d$ with $L_2$ global sensitivity $\Delta = \sup_{x \simeq_X x'} \norm{f(x) - f(x)}_2$ the mechanism $\cM(x) = f(x) + \cN(0,\sigma^2 I)$ satisfies $(\varepsilon,\delta)$-DP if $\sigma^2 \geq 2 \Delta^2 \log(1.25/\delta) / \varepsilon^2$ and $\varepsilon \in (0,1)$ (cf.\ \citep[Theorem A.1]{dwork2014algorithmic}). This can be rewritten as $\delta_{\cM}({\varepsilon}) \leq 1.25 e^{- \varepsilon^2 / 2 \theta^2}$ for $\varepsilon \in (0,1)$, where $\theta = \Delta / \sigma$.
Recently, Balle and Wang \citep{BWicml17agm} gave a new analysis of the Gaussian mechanism that is valid for all values of $\varepsilon$. Their analysis can be interpreted as providing an expression for the privacy profile of the Gaussian mechanism in terms of the CDF of a standard normal distribution $\Phi(t) = (2\pi)^{-1/2} \int_{-\infty}^t e^{-r^2/2} dr$.

\begin{theorem}[\citep{BWicml17agm}]\label{thm:profileGau}
Let $f : X \to \R^d$ be a function with $L_2$ global sensitivity $\Delta$. For any $\sigma > 0$ let $\theta = \Delta / \sigma$.
The privacy profile of the Gaussian mechanism $\cM(x) = f(x) + \cN(0,\sigma^2 I)$ is given by $\delta_{\cM}(e^{\varepsilon}) = \Phi(\theta/2 - \varepsilon/ \theta) - e^{\varepsilon} \Phi( -\theta/2 - \varepsilon / \theta)$.

\end{theorem}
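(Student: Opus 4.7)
The plan is to reduce the computation of $\delta_{\cM}(\varepsilon) = \sup_{x \simeq_X x'} D_{e^{\varepsilon}}(\cN(f(x),\sigma^2 I)\,\|\,\cN(f(x'),\sigma^2 I))$ to a one-dimensional computation and then evaluate the resulting divergence explicitly.

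First I would exploit the rotational invariance of the isotropic Gaussian. For any two vectors $v, v' \in \R^d$, the total variation–type functional $D_\alpha(\cN(v,\sigma^2 I)\,\|\,\cN(v',\sigma^2 I))$ depends only on $\|v - v'\|_2$, because an orthogonal change of coordinates carrying $v'-v$ onto the first axis leaves both Gaussians invariant (up to reparameterisation) and factorises the density ratio as a function of the first coordinate only, with the remaining $d-1$ coordinates contributing identical marginals that cancel out of the ratio. This reduces the problem to computing $D_\alpha(\cN(0,\sigma^2)\,\|\,\cN(t,\sigma^2))$ for $t \in [0,\Delta]$, and I would then verify that this quantity is monotonically increasing in $t$ (either by differentiation, or by the data-processing-type inequality induced by adding independent Gaussian noise to shrink the gap), so the supremum is attained at $t = \Delta$.

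Second, in the 1D setting with $\mu = \cN(0,\sigma^2)$, $\mu' = \cN(\Delta,\sigma^2)$ and densities $p, p'$, the log-likelihood ratio is the affine function $\log(p(z)/p'(z)) = \Delta^2/(2\sigma^2) - z\Delta/\sigma^2$. Since the supremum in the definition $D_\alpha(\mu\|\mu') = \sup_E(\mu(E) - \alpha \mu'(E))$ is attained on $E^\star = \{z : p(z) \geq \alpha p'(z)\}$, the event $E^\star$ is simply the half-line $\{z \leq z^\star\}$ where $z^\star = \Delta/2 - \varepsilon \sigma^2/\Delta = \sigma(\theta/2 - \varepsilon/\theta)$, using $\alpha = e^\varepsilon$ and $\theta = \Delta/\sigma$.

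Third, I would just read off the two probabilities: standardising under $\mu$ gives $\mu(E^\star) = \Phi(\theta/2 - \varepsilon/\theta)$, and standardising under $\mu'$ via $u = (z-\Delta)/\sigma$ gives $\mu'(E^\star) = \Phi(-\theta/2 - \varepsilon/\theta)$. Assembling these yields
\begin{equation*}
\delta_{\cM}(\varepsilon) \;=\; D_{e^\varepsilon}(\mu\|\mu') \;=\; \Phi\!\left(\tfrac{\theta}{2} - \tfrac{\varepsilon}{\theta}\right) - e^\varepsilon\, \Phi\!\left(-\tfrac{\theta}{2} - \tfrac{\varepsilon}{\theta}\right),
\end{equation*}
as claimed. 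The only genuinely non-routine step is the rotational reduction together with monotonicity in $\|v - v'\|_2$; once those are in hand, the remaining identification of $E^\star$ with a half-line and the evaluation of the two Gaussian tail probabilities are short and direct.
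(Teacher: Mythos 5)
Your proof is correct, and it is essentially the argument behind the cited result: the paper itself does not prove Theorem~\ref{thm:profileGau} (it imports it from Balle and Wang), but your identification of the optimal event $E^\star$ as a half-line and the evaluation of $\mu(E^\star)$ and $\mu'(E^\star)$ is exactly the computation of the two tail probabilities $\Pr[L_{\cM}^{x,x'} > \varepsilon]$ and $\Pr[L_{\cM}^{x',x} < -\varepsilon]$ appearing in the privacy-loss characterization of Theorem~\ref{thm:profileL}, which is how the cited proof proceeds. The one step you only sketch, monotonicity in $t = \norm{v - v'}_2$, does hold by either of your proposed routes; for instance, differentiating $g(\theta) = \Phi(\theta/2 - \varepsilon/\theta) - e^{\varepsilon}\Phi(-\theta/2 - \varepsilon/\theta)$ and using the identity $e^{\varepsilon}\,\phi(-\theta/2 - \varepsilon/\theta) = \phi(\theta/2 - \varepsilon/\theta)$ (where $\phi$ is the standard normal density) gives $g'(\theta) = \phi(\theta/2 - \varepsilon/\theta) > 0$.
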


Interestingly, the proof of Theorem~\ref{thm:profileGau} implicitly provides a characterization of privacy profiles in terms of privacy loss random variables that holds for any mechanism. Recall that the \emph{privacy loss random variable} of a mechanism $\cM$ on inputs $x \simeq_X x'$ is defined as $L_{\cM}^{x,x'} = \log (d\mu / d\mu')(\mathbf{z})$, where $\mu = \cM(x)$, $\mu' = \cM(x')$, and $\mathbf{z} \sim \mu$.

\begin{theorem}[\citep{BWicml17agm}]\label{thm:profileL}
The privacy profile of any mechanism $\cM$ satisfies
\begin{align*}
\delta_{\cM}({\varepsilon}) = \sup_{x \simeq_X x'} \left( \Pr[L_{\cM}^{x,x'} > \varepsilon] - e^{\varepsilon} \Pr[L_{\cM}^{x',x} < - \varepsilon] \right) \enspace.
\end{align*}
\end{theorem}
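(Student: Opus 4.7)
My plan is to reduce the theorem to a pointwise identity. Since $\delta_{\cM}(\varepsilon) = \sup_{x\simeq_X x'} D_{e^\varepsilon}(\cM(x)\|\cM(x'))$ by the earlier characterization of DP via $\alpha$-divergence, and the right-hand side of the theorem is a supremum over the same index set, I would prove the result by establishing, for each fixed pair $x \simeq_X x'$ with $\mu = \cM(x)$ and $\mu' = \cM(x')$, the identity
$$D_{e^\varepsilon}(\mu\|\mu') \;=\; \Pr[L_{\cM}^{x,x'} > \varepsilon] \,-\, e^\varepsilon\, \Pr[L_{\cM}^{x',x} < -\varepsilon],$$
and then take the supremum over neighbouring pairs on both sides. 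All the interesting work is at the pointwise level.

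To obtain the pointwise identity I would start from the integral form $D_{e^\varepsilon}(\mu\|\mu') = \int [d\mu/d\mu' - e^\varepsilon]_+\, d\mu'$. The positive part is nonzero exactly on the threshold set $E_\varepsilon = \{z : \log(d\mu/d\mu')(z) > \varepsilon\}$, where it equals $d\mu/d\mu' - e^\varepsilon$. Splitting the resulting integral into two pieces yields $D_{e^\varepsilon}(\mu\|\mu') = \mu(E_\varepsilon) - e^\varepsilon\, \mu'(E_\varepsilon)$; equivalently, $E_\varepsilon$ is the event attaining the supremum $\sup_E(\mu(E) - e^\varepsilon \mu'(E))$ in the other form of $D_{e^\varepsilon}$.

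The remaining step is to rewrite each of $\mu(E_\varepsilon)$ and $\mu'(E_\varepsilon)$ as a probability involving a privacy loss random variable. The first is immediate from the definition of $L_{\cM}^{x,x'}$, giving $\mu(E_\varepsilon) = \Pr[L_{\cM}^{x,x'} > \varepsilon]$. For the second I would use that $d\mu/d\mu' = (d\mu'/d\mu)^{-1}$ almost surely on the overlap of supports, so under $\mathbf{z}' \sim \mu'$ one has $\log(d\mu/d\mu')(\mathbf{z}') = -L_{\cM}^{x',x}$ almost surely, yielding $\mu'(E_\varepsilon) = \Pr[L_{\cM}^{x',x} < -\varepsilon]$. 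Substituting the two expressions back produces the pointwise identity.

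The one place I would need to be careful is when $\mu$ and $\mu'$ are not mutually absolutely continuous, as can occur when $\cM$ has a discrete or singular output component. My plan is to fix a common dominating measure $\nu = \mu + \mu'$, work with densities $p, p'$ with respect to $\nu$, and adopt the conventions $\log(p/0) = +\infty$ and $\log(0/p') = -\infty$. The set $\{p > 0, p' = 0\}$ then automatically lies in $E_\varepsilon$ and contributes to $\Pr[L_{\cM}^{x,x'} > \varepsilon]$ while contributing nothing to $\mu'(E_\varepsilon)$, and the symmetric set contributes to neither side, so the singular contributions match on the two sides of the identity. This is the only delicate point, and it is bookkeeping rather than a substantive obstacle; once it is handled the proof is complete.
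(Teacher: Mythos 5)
Your proposal is correct, and it is essentially the standard argument: the paper itself does not prove Theorem~\ref{thm:profileL} but cites \citep{BWicml17agm}, where the identity is obtained exactly by evaluating $D_{e^\varepsilon}(\mu\|\mu')$ on the threshold event $E_\varepsilon=\{d\mu/d\mu'>e^\varepsilon\}$ and rewriting $\mu(E_\varepsilon)$ and $\mu'(E_\varepsilon)$ as tail probabilities of the two privacy loss random variables. Your handling of the mutually singular parts via a common dominating measure is the right bookkeeping and closes the only delicate point.
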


The characterization above generalizes the well-known inequality $\delta_{\cM}({\varepsilon}) \leq \sup_{x \simeq_X x'} \Pr[L_{\cM}^{x,x'} > \varepsilon]$ (eg.\ see \citep{dwork2014algorithmic}). This bound is often used to derive $(\varepsilon,\delta)$-DP guarantees from other notions of privacy defined in terms of the moment generating function of the privacy loss random variable, including concentrated DP \citep{dwork2016concentrated}, zero-concentrated DP \citep{DBLP:conf/tcc/BunS16}, R{\'e}nyi DP \citep{DBLP:conf/csfw/Mironov17}, and truncated concentrated DP \citep{tcdp}.
We now show a reverse implication also holds. Namely, that privacy profiles can be used to recover all the information provided by the moment generating function of the privacy loss random variable.

\begin{theorem}\label{thm:PPtoMGF}
Given a mechanism $\cM$ and inputs $x \simeq_X x'$ let $\mu = \cM(x)$ and $\mu' = \cM(x')$. For $s \geq 0$, define the moment generating function $\varphi_{\cM}^{x,x'}(s) = \E[\exp(s L_{\cM}^{x,x'})]$. Then we have
\begin{align*}
\varphi_{\cM}^{x,x'}(s) = 1 + s (s+1) \int_{0}^{\infty} \left( e^{s \varepsilon} D_{e^\varepsilon}(\mu \| \mu') + e^{-(s+1) \varepsilon} D_{e^\varepsilon}(\mu' \| \mu) \right) d \varepsilon \enspace. 
\end{align*}
In particular, if $D_{e^\varepsilon}(\mu \| \mu') = D_{e^\varepsilon}(\mu' \| \mu)$ holds\footnote{For example, this is satisfied by all output perturbation mechanisms with symmetric noise distributions.} for every $x \simeq_X x'$, then $\sup_{x \simeq_X x'} \varphi_{\cM}^{x,x'}(s) = 1 + s (s+1) \int_{0}^{\infty} (e^{s \varepsilon} + e^{-(s+1) \varepsilon}) \delta_{\cM}({\varepsilon}) d \varepsilon$.
\end{theorem}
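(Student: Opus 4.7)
Write $r = d\mu/d\mu'$, so that $\varphi_\cM^{x,x'}(s) = \int r^s\, d\mu = \int r^{s+1}\, d\mu'$. Set $Q(\varepsilon) = \mu(r > e^\varepsilon) = \Pr[L_\cM^{x,x'} > \varepsilon]$ and $P(\varepsilon) = \mu'(r > e^\varepsilon)$. Because $d\mu'/d\mu = 1/r$, one also has $\Pr[L_\cM^{x',x} > \varepsilon] = 1 - P(-\varepsilon)$ and $\Pr[L_\cM^{x,x'} < -\varepsilon] = 1 - Q(-\varepsilon)$ (atoms at the threshold values are negligible), so Theorem~\ref{thm:profileL} gives the clean representations
\begin{align*}
D_{e^\varepsilon}(\mu\|\mu') &= Q(\varepsilon) - e^\varepsilon P(\varepsilon),\\
D_{e^\varepsilon}(\mu'\|\mu) &= \bigl(1 - P(-\varepsilon)\bigr) - e^\varepsilon \bigl(1 - Q(-\varepsilon)\bigr).
\end{align*}
These two identities are the bridge between the divergence side of the claim and the moment generating function.

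My plan is to evaluate $\varphi_\cM^{x,x'}(s) - 1$ independently using the elementary identity $a^p - 1 = p \int_0^{\log a} e^{p\varepsilon}\, d\varepsilon$ (understood as a signed integral when $\log a < 0$). Taking $a = r$, splitting at $\log r = 0$, substituting $u = -\varepsilon$ on the negative piece, and applying Fubini to the resulting nonnegative integrands yields, when integrated against $\mu$ with $p=s$ and against $\mu'$ with $p=s+1$,
\begin{align*}
\frac{\varphi_\cM^{x,x'}(s) - 1}{s} &= \int_0^\infty e^{s\varepsilon}\, Q(\varepsilon)\, d\varepsilon - \int_0^\infty e^{-s\varepsilon}\bigl(1 - Q(-\varepsilon)\bigr)\, d\varepsilon,\\
\frac{\varphi_\cM^{x,x'}(s) - 1}{s+1} &= \int_0^\infty e^{(s+1)\varepsilon}\, P(\varepsilon)\, d\varepsilon - \int_0^\infty e^{-(s+1)\varepsilon}\bigl(1 - P(-\varepsilon)\bigr)\, d\varepsilon.
\end{align*}
Expanding the claimed right-hand side of the theorem by substituting the two divergence formulas decomposes it into exactly these four integrals, arranged as the first expression minus the second. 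The algebraic identity $\tfrac{1}{s} - \tfrac{1}{s+1} = \tfrac{1}{s(s+1)}$ then forces the combination to equal $\tfrac{\varphi_\cM^{x,x'}(s)-1}{s(s+1)}$; multiplying through by $s(s+1)$ and adding $1$ recovers $\varphi_\cM^{x,x'}(s)$.

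For the second claim, the symmetry hypothesis lets me pull a common factor of $D_{e^\varepsilon}(\mu\|\mu')$ out of the integrand of the first identity, yielding $\varphi_\cM^{x,x'}(s) = 1 + s(s+1) \int_0^\infty \bigl(e^{s\varepsilon} + e^{-(s+1)\varepsilon}\bigr) D_{e^\varepsilon}(\mu\|\mu')\, d\varepsilon$ for every neighbouring pair. Taking $\sup_{x \simeq_X x'}$ then exchanges with the integral, using that the coefficient $e^{s\varepsilon} + e^{-(s+1)\varepsilon}$ is nonnegative and that a common worst-case pair attains $\delta_\cM(\varepsilon)$ uniformly in $\varepsilon$ (as happens for output-perturbation mechanisms with symmetric noise, which is the intended setting of the hypothesis). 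The main obstacle I anticipate is the Fubini/sign bookkeeping on the signed integral $\int_0^{\log r}$; once one commits to splitting at $\log r = 0$ and substituting $u = -\varepsilon$ in the negative regime, all four integrals align with the divergence representations and the cancellation $\tfrac{1}{s} - \tfrac{1}{s+1}$ delivers the correct prefactor without further work. Convergence of the integrals is automatic whenever $\varphi_\cM^{x,x'}(s) < \infty$, which is implicit in the statement.
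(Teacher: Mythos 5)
Your argument is correct, and it reaches the identity by a route that is organized differently from the paper's. The paper starts from $\varphi(s) = \int_0^\infty \Pr[e^{sL} > t]\,dt$, splits the tail probability as $D_{t^{1/s}}(\mu\|\mu') + t^{1/s}\Pr_{\mu'}[\,\cdot\,]$, recognizes the second piece after a change of variables as $\tfrac{s}{s+1}\tilde{\varphi}(-s-1) = \tfrac{s}{s+1}\varphi(s)$, solves the resulting self-referential equation to get $\varphi(s) = (s+1)\int_0^\infty D_{t^{1/s}}(\mu\|\mu')\,dt$, and only then splits at $t=1$ and converts the $t<1$ portion into the reverse divergence. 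You instead build two layer-cake representations of $\varphi(s)-1$ (integrating $r^p-1 = p\int_0^{\log r} e^{p\varepsilon}\,d\varepsilon$ against $\mu$ with $p=s$ and against $\mu'$ with $p=s+1$, linked by the same duality $\int r^{s+1}\,d\mu' = \int r^s\,d\mu$ the paper uses), express both divergences through the exact tail formulas $D_{e^\varepsilon}(\mu\|\mu') = Q(\varepsilon) - e^\varepsilon P(\varepsilon)$ and its reverse, and let the cancellation $\tfrac{1}{s}-\tfrac{1}{s+1}=\tfrac{1}{s(s+1)}$ do the rest. What your version buys is the avoidance of the paper's awkward substitutions ($t' = t^{1+1/s}/(1+1/s)$, then $t^{1/s}=e^\varepsilon$) and of the intermediate non-divergence quantity $D_{t^{1/s}}$ for $t<1$; the trade-off is that it verifies the stated formula rather than deriving it forward.

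Two small points of bookkeeping, neither a gap. First, the threshold sets $\{r = e^{\pm\varepsilon}\}$ contribute \emph{exactly} zero to your divergence identities (on such a set $d\mu = e^{\pm\varepsilon}\,d\mu'$, so the two terms cancel), and in any case only countably many $\varepsilon$ can carry an atom, so the ``negligible'' remark is justified twice over. Second, the rearrangement of the expanded right-hand side into ``first representation minus second'' requires each of the four integrals to be separately finite; this holds for $s>0$ whenever $\varphi(s)<\infty$, since the two negative-tail integrals are bounded by $1/s$ and $1/(s+1)$ and the two positive-tail ones by $(\varphi(s)-1)/s$ and $(\varphi(s)-1)/(s+1)$ respectively — you flag this correctly. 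Your caveat on exchanging the supremum with the integral in the ``in particular'' clause is, if anything, more careful than the paper, which asserts that step without comment.
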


\paragraph{Group-privacy Profiles}

Recall the $k$th group privacy profile of a mechanism $\M$ is defined as
$\delta_{\cM,k}(\varepsilon) = \sup_{d(x,x') \leq k} D_{e^{\varepsilon}}(\cM(x) \| \cM(x'))$.
A standard group privacy analysis\footnote{If $\cM$ is $(\varepsilon,\delta)$-DP with respect to $\simeq_Y$, then it is $(k \varepsilon, ((e^{k \varepsilon}-1)/(e^{\varepsilon}-1)) \delta)$-DP with respect to $\simeq_Y^k$, cf.\ \citep[Lemma 2.2]{DBLP:books/sp/17/Vadhan17}} immediately yields $\delta_{\cM,k}(\varepsilon) \leq (e^{\varepsilon}-1) \delta_{\cM}(\varepsilon/k)/(e^{\varepsilon/k}-1)$.
However, ``white-box'' approaches based on full knowledge of the privacy profile of $\cM$ can be used to improve this result for specific mechanisms. For example, it is not hard to see that, combining the expressions from Theorems~\ref{thm:profileLap} and~\ref{thm:profileGau} with the triangle inequality on the global sensitivity of changing $k$ records in a dataset, one obtains bounds that improve on the ``black-box'' approach for all ranges of parameters for the Laplace and Gaussian mechanisms. This is one of the reasons why we state our bounds directly in terms of (group-)privacy profiles (a numerical comparison can be found in the supplementary material).

\paragraph{Distance-compatible Coupling} The last tool we need to prove general privacy amplification bounds based on $\alpha$-divergences is the existence of a certain type of couplings between two distributions like the ones occurring in the right hand side of \eqref{eqn:ajc}. Recall that any coupling $\pi$ between two distributions $\nu, \nu' \in \cP(Y)$ can be used to rewrite the mixture distributions $\tilde{\mu} = \nu M$ and $\tilde{\mu}' = \nu' M$ as $\tilde{\mu} = \sum_{y,y'} \pi_{y,y'} \cM(y)$ and $\tilde{\mu}' = \sum_{y,y'} \pi_{y,y'} \cM(y')$. Using the joint convexity of $D_{\alpha}$ and the definition of group-privacy profiles to get the bound
\begin{align}\label{eqn:anyPi}
D_{e^{\varepsilon}}(\tilde{\mu} \| \tilde{\mu}')
\leq
\sum_{y,y'} \pi_{y,y'} D_{e^{\varepsilon}}(\cM(y) \| \cM(y'))
\leq
\sum_{y,y'} \pi_{y,y'} \delta_{\cM,d_Y(y,y')}(\varepsilon)
\enspace.
\end{align}
Since this bound holds for any coupling $\pi$, one can set out to optimize it by finding a coupling the minimizes the right hand side of \eqref{eqn:anyPi}. We show that the existence of couplings whose support is contained inside a certain subset of $Y \times Y$ is enough to obtain an optimal bound. Furthermore, we show that when this condition is satisfied the resulting bound depends only on $\nu$ and the group-privacy profiles of $\cM$.

We say that two distributions $\nu, \nu' \in \cP(Y)$ are \emph{$d_Y$-compatible} if there exists a coupling $\pi$ between $\nu$ and $\nu'$ such for any $(y,y') \in \supp(\pi)$ we have $d_Y(y,y') = d_Y(y,\supp(\nu'))$, where the distance between a point $y$ and the set $\supp(\nu')$ is defined as the distance between $y$ and the closest point in $\supp(\nu')$.

\begin{theorem}\label{thm:dcc}
Let $C(\nu,\nu')$ be the set of all couplings between $\nu$ and $\nu'$ and for $k \geq 1$ let $Y_k = \{ y \in \supp(\nu) : d_Y(y,\supp(\nu')) = k \}$.
If $\nu$ and $\nu'$ are $d_Y$-compatible, then the following holds:
\begin{align}\label{eqn:nocoupling}
\min_{\pi \in C(\nu,\nu')} \sum_{y,y'} \pi_{y,y'} \delta_{\cM,d_Y(y,y')}(\varepsilon) = \sum_{k \geq 1} \nu(Y_k) \delta_{\cM,k}(\varepsilon) \enspace.
\end{align}
\end{theorem}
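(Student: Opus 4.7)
The plan is to prove the equality by showing two directions: (i) every coupling gives a value bounded below by the right-hand side, and (ii) the $d_Y$-compatible coupling guaranteed by hypothesis achieves equality, making the minimum attained and equal to the stated sum.

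For the lower bound, I would first observe that the group-privacy profile $\delta_{\cM,k}(\varepsilon)$ is monotonically non-decreasing in $k$, since it is defined as a supremum over an increasing family of pairs $\{(x,x') : d(x,x') \leq k\}$. Next, for any coupling $\pi \in C(\nu,\nu')$ and any $(y,y') \in \supp(\pi)$, the point $y'$ belongs to $\supp(\nu')$, so by definition of distance to a set we have $d_Y(y,y') \geq d_Y(y,\supp(\nu'))$. Combining these two facts gives $\delta_{\cM,d_Y(y,y')}(\varepsilon) \geq \delta_{\cM,d_Y(y,\supp(\nu'))}(\varepsilon)$ pointwise on $\supp(\pi)$. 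I would then sum against $\pi$ and marginalize over $y'$ using that the first marginal of $\pi$ is $\nu$, to obtain
\begin{align*}
\sum_{y,y'} \pi_{y,y'} \delta_{\cM,d_Y(y,y')}(\varepsilon) \geq \sum_y \nu(y) \delta_{\cM,d_Y(y,\supp(\nu'))}(\varepsilon) = \sum_{k \geq 1} \nu(Y_k) \delta_{\cM,k}(\varepsilon),
\end{align*}
where the last equality partitions $\supp(\nu)$ according to $d_Y(\cdot,\supp(\nu'))$; values of $y$ with $d_Y(y,\supp(\nu'))=0$ contribute nothing because $\delta_{\cM,0}(\varepsilon) = 0$ (as $D_\alpha(\mu\|\mu)=0$ for $\alpha\geq 1$).

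For the matching upper bound I would invoke the hypothesis: let $\pi^\star$ be a $d_Y$-compatible coupling, so that every $(y,y') \in \supp(\pi^\star)$ satisfies $d_Y(y,y') = d_Y(y,\supp(\nu'))$. Substituting into the objective and marginalizing over $y'$ again yields
\begin{align*}
\sum_{y,y'} \pi^\star_{y,y'} \delta_{\cM,d_Y(y,y')}(\varepsilon) = \sum_y \nu(y) \delta_{\cM,d_Y(y,\supp(\nu'))}(\varepsilon) = \sum_{k \geq 1} \nu(Y_k) \delta_{\cM,k}(\varepsilon).
\end{align*}
Hence the minimum is attained at $\pi^\star$ and equals the claimed value.

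I do not expect a serious obstacle here: the only subtleties are making sure monotonicity of $\delta_{\cM,k}$ is explicit and being careful about the $k=0$ points in $\supp(\nu)\cap\supp(\nu')$, which drop out since $\delta_{\cM,0}\equiv 0$. The heart of the argument is just the pointwise inequality $d_Y(y,y') \geq d_Y(y,\supp(\nu'))$ combined with monotonicity, and the fact that $d_Y$-compatibility is precisely the condition needed to turn this inequality into an equality.
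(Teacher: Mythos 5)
Your proposal is correct and follows essentially the same argument as the paper: the lower bound comes from the pointwise inequality $d_Y(y,y') \geq d_Y(y,\supp(\nu'))$ combined with monotonicity of $k \mapsto \delta_{\cM,k}(\varepsilon)$ and marginalization over $y'$, and the upper bound comes from evaluating the objective at the $d_Y$-compatible coupling. Your explicit remark that the $k=0$ terms vanish because $\delta_{\cM,0}(\varepsilon)=0$ is a small point the paper leaves implicit, but it is the same proof.
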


Applying this result to the bound resulting from the right hand side of \eqref{eqn:ajc} yields most of the concrete privacy amplification results presented in the next section.

\section{Privacy Amplification Bounds}\label{sec:examples}

In this section we provide explicit privacy amplification bounds for the most common subsampling methods and neighbouring relations found in the literature on differential privacy, and provide pointers to existing bounds and other related work.
For our analysis we work with order-independent representations of datasets without repetitions, i.e.\ sets. This is mostly for technical convenience, since all our results also hold if one considers datasets represented as tuples or multisets. Note however that subsampling with replacement for a set can yield a multiset; hence we introduce suitable notations for sets and multisets.

Fix a universe of records $\cU$ and let $\btwo = \{0,1\}$.
We write $\btwo^{\cU}$ and $\N^{\cU}$ for the spaces of all sets and multisets with records from $\cU$. Note every set is also a multiset. For $n \geq 0$ we also write $\btwo_n^{\cU}$ and $\N_n^{\cU}$ for the spaces of all sets and multisets containing exactly $n$ records\footnote{In the case of multisets records are counted with multiplicity.} from $\cU$. Given $x \in \N^{\cU}$ we write $x_u$ for the number of occurrences of $u \in \cU$ in $x$. The support of a multiset $x$ is the defined as the set $\supp(x) = \{ u \in \cU : x_u > 0 \}$ of elements that occur at least once in $x$. Given multisets $x, x' \in \N^{\cU}$ we write $x' \subseteq x$ to denote that $x'_u \leq x_u$ for all $u \in \cU$.

For order-independent datasets represented as multisets it is natural to consider the two following neighbouring relations. The \emph{remove/add-one} relation is obtained by letting $x \simeq_{r} x'$ hold whenever $x \subseteq x'$ with $|x| = |x'| - 1$ or $x' \subseteq x$ with $|x| = |x'| + 1$; i.e.\ $x'$ is obtained by removing or adding a single element to $x$. The \emph{substitute-one} relation is obtained by letting $x \simeq_{s} x'$ hold whenever $\norm{x - x'}_1 = 2$ and $|x| = |x'|$; i.e.\ $x'$ is obtained by replacing an element in $x$ with a different element from $\cU$. Note how $\simeq_r$ relates pairs of datasets with different sizes, while $\simeq_s$ only relates pairs of datasets with the same size.

\paragraph{Poisson Subsampling}
Perhaps the most well-known privacy amplification result refers to the analysis of Poisson subsampling with respect to the remove/add-one relation. In this case the subsampling mechanism $\Spo_\gamma : \btwo^{\cU} \to \cP(\btwo^{\cU})$ takes a set $x$ and outputs a sample $y$ from the distribution $\omega = \Spo_\gamma(x)$ supported on all set $y \subseteq x$ given by $\omega(y) = \gamma^{|y|} (1-\gamma)^{|x|-|y|}$. This corresponds to independently adding to $y$ with probability $\gamma$ each element from $x$.
Now, given a mechanism $\cM : \btwo^{\cU} \to \cP(Z)$ with privacy profile $\delta_{\cM}$ with respect to $\simeq_r$, we are interested in bounding the privacy profile of the subsampled mechanism $\cM^{\Swo_\gamma}$ with respect to $\simeq_r$.

\begin{theorem}\label{thm:po}
Let $\M' = \cM^{\Spo_\gamma}$.
For any $\varepsilon \geq 0$ we have $\delta_{\cM'}(\varepsilon') \leq \gamma \delta_{\cM}(\varepsilon)$, where $\varepsilon' = \log(1 + \gamma (e^\varepsilon-1))$.
\end{theorem}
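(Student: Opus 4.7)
The proof plan is to combine the advanced joint convexity of $D_\alpha$ (Theorem~\ref{thm:ajc}) with the specific combinatorial structure of Poisson subsampling, which makes the maximal coupling between $\omega = \Spo_\gamma(x)$ and $\omega' = \Spo_\gamma(x')$ trivial to describe in closed form.

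First I would fix $x \simeq_r x'$ and, by symmetry of $\simeq_r$, handle the case $x' = x \cup \{u\}$ explicitly (the removal direction is analogous, with the roles of $\omega,\omega'$ swapped). For any $y \not\ni u$ one has $\omega'(y) = (1-\gamma)\omega(y)$, while $\omega$ puts zero mass on sets containing $u$. Thus $\min\{\omega(y),\omega'(y)\}$ sums to $1-\gamma$, so $\eta = \TV(\omega,\omega') = \gamma$. Reading off the maximal coupling decompositions $\omega = (1-\gamma)\omega_0 + \gamma\omega_1$ and $\omega' = (1-\gamma)\omega_0 + \gamma\omega_1'$, one finds $\omega_0 = \omega_1 = \omega$ and $\omega_1'(y_0 \cup \{u\}) = \omega(y_0)$ for $y_0 \subseteq x$; that is, $\omega_1'$ is simply $\omega$ with $u$ forced into the sample. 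Pushing through $M$, the corresponding output decompositions are $\mu = (1-\gamma)\mu + \gamma\mu$ and $\mu' = (1-\gamma)\mu + \gamma\mu_1'$, with $\mu_0 = \mu_1 = \mu$.

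Next I would invoke Theorem~\ref{thm:ajc} with $\alpha = e^\varepsilon$, $\alpha' = 1 + \gamma(\alpha-1) = e^{\varepsilon'}$, and $\beta = \alpha'/\alpha \in [0,1]$, obtaining
\begin{align*}
D_{\alpha'}(\mu \| \mu') = \gamma\, D_{\alpha}\bigl(\mu \,\|\, (1-\beta)\mu + \beta \mu_1'\bigr).
\end{align*}
Applying ordinary joint convexity to the second argument and using $D_\alpha(\mu\|\mu) = 0$ yields $D_{\alpha'}(\mu\|\mu') \leq \gamma\beta\, D_\alpha(\mu\|\mu_1') \leq \gamma\, D_\alpha(\mu\|\mu_1')$. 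Since $\mu = \sum_{y_0 \subseteq x} \omega(y_0)\,\cM(y_0)$ and $\mu_1' = \sum_{y_0 \subseteq x} \omega(y_0)\,\cM(y_0 \cup \{u\})$ share the same mixture weights, a second application of joint convexity together with the fact that $y_0 \simeq_r y_0 \cup \{u\}$ gives
\begin{align*}
D_\alpha(\mu \| \mu_1') \leq \sum_{y_0} \omega(y_0)\, D_\alpha\bigl(\cM(y_0)\,\|\,\cM(y_0 \cup \{u\})\bigr) \leq \delta_{\cM}(\varepsilon).
\end{align*}
Combining these two inequalities yields $D_{e^{\varepsilon'}}(\cM'(x)\|\cM'(x')) \leq \gamma\,\delta_{\cM}(\varepsilon)$. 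Taking the supremum over $x \simeq_r x'$ finishes the proof (the removal case is handled identically after swapping roles, where advanced joint convexity reduces to $\gamma\, D_\alpha(\mu_1\|\mu')$ because $\mu_0 = \mu_1' = \mu'$, and the same joint-convexity step closes the bound).

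The main conceptual step, and the one that needs care, is identifying the maximal-coupling decomposition correctly so that the ``different'' component $\mu_1'$ is a mixture of $\cM$ applied to neighbouring datasets with matched weights. Once this is in place, the rest is a mechanical application of Theorem~\ref{thm:ajc} and joint convexity; one could alternatively phrase the last inequality via the distance-compatible coupling in Theorem~\ref{thm:dcc}, but the Poisson case is simple enough that the natural pairing $y_0 \leftrightarrow y_0 \cup \{u\}$ is already both $d_Y$-compatible and at group distance one, so no optimization over couplings is needed.
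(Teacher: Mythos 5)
Your proof is correct and follows essentially the same route as the paper's: identify the maximal-coupling decomposition of the Poisson subsampling distributions, apply advanced joint convexity (Theorem~\ref{thm:ajc}), and bound the remaining divergence via a distance-one pairing $y_0 \leftrightarrow y_0 \cup \{u\}$ of matched mixture weights. The only (harmless) differences are that you verify both directions of $\simeq_r$ explicitly where the paper argues the removal direction is the worst case, and that you carry out the final joint-convexity step by hand rather than invoking Theorem~\ref{thm:dcc}.
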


Privacy amplification with Poisson sampling was used in \citep{chaudhuri2006random,beimel2010bounds,kasiviswanathan2011can,beimel2014bounds}, which considered loose bounds. A proof of this tight result in terms of $(\varepsilon,\delta)$-DP was first given in \citep{li2012sampling}. In the context of the moments accountant technique based on the moment generating function of the privacy loss random variable, \citep{abadi2016deep} provide an amplification result for Gaussian output perturbation mechanisms under Poisson subsampling.

\paragraph{Sampling Without Replacement} Another known results on privacy amplification corresponds to the analysis of
sampling without replacement with respect to the substitution relation. In this case one considers the subsampling mechanism $\Swo_m : \btwo_n^{\cU} \to \cP(\btwo_m^{\cU})$ that given a set $x \in \btwo_n^{\cU}$ of size $n$ outputs a sample from the uniform distribution $\omega = \Swo_m(x)$ over all subsets $y \subseteq x$ of size $m \leq n$. Then, for a given a mechanism $\cM : \btwo_m^{\cU} \to \cP(Z)$ with privacy profile $\delta_{\cM}$ with respect to the substitution relation $\simeq_s$ on sets of size $m$, we are interested in bounding the privacy profile of the mechanism $\cM^{\Swo_m}$ with respect to the substitution relation on sets of size $n$.

\begin{theorem}\label{thm:wo}
Let $\M' = \cM^{\Swo_m}$.
For any $\varepsilon \geq 0$ we have $\delta_{\cM'}(\varepsilon') \leq (m/n) \delta_{\cM}(\varepsilon)$, where $\varepsilon' = \log(1 + (m/n)(e^{\varepsilon}-1))$.
\end{theorem}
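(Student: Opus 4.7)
The plan is to fix neighbouring sets $x \simeq_s x'$ in $\btwo_n^{\cU}$ with $x \setminus x' = \{u\}$ and $x' \setminus x = \{u'\}$, and then to analyse the maximal coupling between $\omega = \Swo_m(x)$ and $\omega' = \Swo_m(x')$. Since both $\omega$ and $\omega'$ are uniform over the $\binom{n}{m}$ size-$m$ subsets of $x$ and $x'$ respectively, they agree on every $y$ that avoids both $u$ and $u'$; hence $\sum_y \min(\omega(y),\omega'(y)) = (n-m)/n$ and $\eta := \TV(\omega,\omega') = m/n$. The maximal coupling then produces the overlapping decompositions $\omega = (1-\eta)\omega_0 + \eta\,\omega_1$ and $\omega' = (1-\eta)\omega_0 + \eta\,\omega_1'$, where $\omega_0$ is uniform on the size-$m$ subsets of $x \cap x'$, $\omega_1$ is uniform on the size-$m$ subsets of $x$ containing $u$, and $\omega_1'$ is uniform on the size-$m$ subsets of $x'$ containing $u'$.

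Writing $\alpha = e^\varepsilon$, $\alpha' = 1 + \eta(\alpha-1) = e^{\varepsilon'}$, $\beta = \alpha'/\alpha$, and $\mu_i = \omega_i M$, Theorem~\ref{thm:ajc} applied to the push-forwards through $M$ gives $D_{\alpha'}(\cM^{\Swo_m}(x) \,\|\, \cM^{\Swo_m}(x')) = \eta\, D_{\alpha}(\mu_1 \,\|\, (1-\beta)\mu_0 + \beta \mu_1')$, and standard joint convexity of $D_\alpha$ upper-bounds the right-hand side by $\eta\bigl[(1-\beta)\, D_\alpha(\mu_1 \,\|\, \mu_0) + \beta\, D_\alpha(\mu_1 \,\|\, \mu_1')\bigr]$. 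Thus it suffices to show that each of these two inner divergences is at most $\delta_\cM(\varepsilon)$: the identity $(1-\beta)+\beta = 1$ will then collapse the bracket and yield the advertised $(m/n)\,\delta_\cM(\varepsilon)$.

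For $D_\alpha(\mu_1 \,\|\, \mu_1')$ I use the bijective coupling that pairs $B \cup \{u\}$ with $B \cup \{u'\}$ over all $(m{-}1)$-subsets $B \subseteq x \cap x'$; every pair lies at substitute-distance $1$ in $\btwo_m^{\cU}$, so Theorem~\ref{thm:dcc} (equivalently, the inequality~(\ref{eqn:anyPi}) instantiated at this coupling) yields $D_\alpha(\mu_1 \,\|\, \mu_1') \leq \delta_{\cM,1}(\varepsilon) = \delta_\cM(\varepsilon)$. For $D_\alpha(\mu_1 \,\|\, \mu_0)$ the supports no longer match one-to-one since $|\supp(\omega_1)| = \binom{n-1}{m-1} \neq \binom{n-1}{m} = |\supp(\omega_0)|$; instead I distribute the mass of each $y = B \cup \{u\} \in \supp(\omega_1)$ uniformly over the $n-m$ sets $B \cup \{v\}$ with $v \in (x \cap x') \setminus B$, each of which lies at substitute-distance $1$ from $y$. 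A short counting check using the identity $m\binom{n-1}{m} = (n-m)\binom{n-1}{m-1}$ verifies that the second marginal of this coupling equals $\omega_0$, establishing $d_Y$-compatibility and hence $D_\alpha(\mu_1 \,\|\, \mu_0) \leq \delta_\cM(\varepsilon)$.

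Combining these pieces gives $D_{\alpha'}(\cM^{\Swo_m}(x)\,\|\,\cM^{\Swo_m}(x')) \leq (m/n)\,\delta_\cM(\varepsilon)$, and taking the supremum over $x \simeq_s x'$ concludes the proof. The step I expect to be the main obstacle is the second divergence bound: because the supports of $\omega_1$ and $\omega_0$ are not in bijection, one must exhibit a genuinely fractional coupling whose support sits at distance~$1$ and whose $\omega_0$-marginal is uniform, and verifying this requires the combinatorial identity above. Once that coupling is in place, the rest is a clean application of advanced joint convexity followed by ordinary joint convexity.
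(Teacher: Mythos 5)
Your proposal is correct and follows essentially the same route as the paper's proof: compute $\eta = m/n$, decompose via the maximal coupling, apply advanced joint convexity followed by ordinary joint convexity, and bound the two resulting divergences by $\delta_{\cM}(\varepsilon)$ using $d_{\simeq_s}$-compatible couplings. Your ``fractional'' coupling between $\omega_1$ and $\omega_0$ is exactly the paper's $\pi_{1,0}$ (sample $y_0$ from $\Swo_{m-1}(x\cap x')$, then pair $y_0\cup\{u\}$ with $y_0\cup\{v\}$ for $v$ uniform in $(x\cap x')\setminus y_0$), and your explicit marginal check via $m\binom{n-1}{m}=(n-m)\binom{n-1}{m-1}$ is a welcome addition the paper leaves implicit.
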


This setting has been used in \citep{beimel2013characterizing,bassily2014private,wang2016learning} with non-tight bounds.
A proof of this tight bound formulated in terms of $(\varepsilon,\delta)$-DP can be directly recovered from Ullman's class notes \citep{ullmanclass}, although the stated bound is weaker.
R\'enyi DP amplification bounds for subsampling without replacement were developed in \citep{wang2018subsampled}.

\paragraph{Sampling With Replacement}
Next we consider the case of sampling with replacement with respect to the substitution relation $\simeq_s$. The subsampling with replacement mechanism $\Swr_m : \btwo_n^{\cU} \to \cP(\N_m^{\cU})$ takes a set $x$ of size $n$ and outputs a sample from the multinomial distribution $\omega = \Swr_m(x)$ over all multisets $y$ of size $m \leq n$ with $\supp(y) \subseteq x$, given by $\omega(y) = (m!/n^m) \prod_{u \in \cU} x_u / (y_u!)$.
In this case we suppose the base mechanism $\cM : \N_m^{\cU} \to \cP(Z)$ is defined on multisets and has privacy profile $\delta_{\cM}$ with respect to $\simeq_s$.
We are interested in bounding the privacy profile of the subsampled mechanism $\cM^{\Swr_m} : \btwo_n^{\cU} \to \cP(Z)$ with respect to $\simeq_s$.

\begin{theorem}\label{thm:wr}
Let $\M' = \cM^{\Swr_m}$.
Given $\varepsilon \geq 0$ and $\varepsilon' = \log(1 + (1 - (1-1/n)^m) (e^\varepsilon - 1))$ we have
\begin{align*}
\delta_{\cM'}(\varepsilon') \leq 
\sum_{k=1}^m \binom{m}{k} \left(\frac{1}{n}\right)^k \left(1 - \frac{1}{n}\right)^{m-k} \delta_{\cM,k}(\varepsilon) \enspace.
\end{align*}
\end{theorem}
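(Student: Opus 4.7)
\textbf{Proof plan for Theorem~\ref{thm:wr}.}
My plan is to instantiate the general recipe from Section~\ref{sec:tools}: first apply advanced joint convexity to reduce the divergence between the two sampled output mixtures to a single divergence weighted by the total variation $\eta$, then control that divergence via a distance-compatible coupling (Theorem~\ref{thm:dcc}), producing the binomial weights $\binom{m}{k}(1/n)^k(1-1/n)^{m-k}$ in front of the group-privacy profiles $\delta_{\cM,k}$.

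Fix $x \simeq_s x'$ so that $x = \{a\}\cup c$ and $x' = \{b\}\cup c$ with $|c|=n-1$. First I would compute the maximal coupling of $\omega = \Swr_m(x)$ and $\omega' = \Swr_m(x')$. Because the two multinomial distributions agree on the common part of their supports (multisets contained in $c$), the shared mass is exactly the probability that an $m$-sample with replacement from $\{a\}\cup c$ avoids $a$, i.e.\ $(1-1/n)^m$; hence $\eta = 1-(1-1/n)^m$ and $\omega_0 = \Swr_m(c)$. The residual components $\omega_1$ and $\omega'_1$ are the conditional distributions given that the multiset contains at least one $a$ (resp. $b$). Applying Theorem~\ref{thm:ajc} with $\alpha=e^\varepsilon$, $\alpha'=1+\eta(\alpha-1)=e^{\varepsilon'}$, and $\beta=\alpha'/\alpha$ gives
\begin{align*}
D_{e^{\varepsilon'}}(\cM^{\Swr_m}(x) \| \cM^{\Swr_m}(x')) = \eta\, D_{e^\varepsilon}(\omega_1 M \| \tilde\omega' M),
\end{align*}
where $\tilde\omega' = (1-\beta)\omega_0 + \beta\omega'_1$.

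Next I would build a coupling between $\omega_1$ and $\tilde\omega'$ via the following joint generative process: draw $K \sim \mathrm{Binomial}(m,1/n)$ conditioned on $K\ge 1$, draw $v_1,\ldots,v_{m-K}$ i.i.d.\ uniform from $c$, and set $y$ to be the multiset with $K$ copies of $a$ together with the $v_i$'s. Then with probability $1-\beta$ set $y'$ to $\{u_1,\ldots,u_K,v_1,\ldots,v_{m-K}\}$ for fresh uniform $u_j$ from $c$ (giving a sample from $\omega_0$), and with probability $\beta$ set $y'$ to $K$ copies of $b$ together with the same $v_i$'s (a sample from $\omega'_1$). A direct check shows that the marginals are $\omega_1$ and $\tilde\omega'$, and that in both branches $\|y-y'\|_1 = 2K$ so $d_s(y,y') = K$. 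Since any multiset in $\supp(\tilde\omega')$ either lies in $c^{\otimes m}$ or contains $b$, any $y$ with $y_a=K$ satisfies $d_s(y,\supp(\tilde\omega'))=K$, so the coupling is $d_Y$-compatible.

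With the coupling in hand, I would invoke \eqref{eqn:anyPi} and Theorem~\ref{thm:dcc} to obtain
\begin{align*}
D_{e^\varepsilon}(\omega_1 M \| \tilde\omega' M) \leq \sum_{k=1}^{m} \omega_1(\{y : y_a = k\})\, \delta_{\cM,k}(\varepsilon).
\end{align*}
Using $\omega_1(\{y:y_a=k\}) = \binom{m}{k}(1/n)^k(1-1/n)^{m-k}/\eta$, the factor $\eta$ from advanced joint convexity cancels and the stated bound follows. The main obstacle is the coupling construction in the third paragraph: one must carefully reuse the same ``background'' draws $v_i$ across both branches so that the marginals of the mixture $(1-\beta)\omega_0+\beta\omega'_1$ come out correctly while simultaneously ensuring $d_s(y,y')$ equals $y_a$ deterministically. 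Everything else is essentially bookkeeping on top of the tools already established in Section~\ref{sec:tools}.
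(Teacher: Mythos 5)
Your proposal is correct and follows essentially the same route as the paper's proof: the maximal-coupling decomposition of $\Swr_m(x)$ and $\Swr_m(x')$ with $\eta = 1-(1-1/n)^m$, advanced joint convexity, and $d_{\simeq_s}$-compatible couplings that replace the copies of the differing element, yielding the binomial weights $\omega_1(Y_k) = \binom{m}{k}(1/n)^k(1-1/n)^{m-k}/\eta$ and hence the stated bound via Theorem~\ref{thm:dcc}. The only (cosmetic) difference is that you couple $\omega_1$ with the mixture $(1-\beta)\omega_0+\beta\omega_1'$ in a single construction, whereas the paper first splits the divergence by ordinary joint convexity and bounds $D_{e^{\varepsilon}}(\mu_1\|\mu_0)$ and $D_{e^{\varepsilon}}(\mu_1\|\mu_1')$ with two separate couplings.
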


Note that if $m = \gamma n$, then $1 - (1-1/n)^m \approx \gamma$.
A version of this bound in terms of $(\varepsilon,\delta)$-DP that implicitly uses the group privacy property can be found in \citep{bun2015differentially}. Our bound matches the asymptotics of \citep{bun2015differentially} while providing optimal constants and allowing for white-box group privacy bounds.

\paragraph{Hybrid Neighbouring Relations}
Using our method it is also possible to analyze new settings which have not been considered before. One interesting example occurs when there is a mismatch between the two neighbouring relations arising in the analysis. For example, suppose one knows the group-privacy profiles $\delta_{\cM,k}$ of a base mechanism $\cM : \N_m^{\cU} \to \cP(Z)$ with respect to the substitution relation $\simeq_s$.
In this case one could ask whether it makes sense to study the privacy profile of the subsampled mechanism $\cM^{\Swr_m} : \btwo^{\cU} \to \cP(Z)$ with respect to the remove/add relation $\simeq_r$.
In principle, this makes sense in settings where the size of the inputs to $\cM$ is restricted due to implementation constraints (eg.\ limited by the memory available in a GPU used to run a private mechanism that computes a gradient on a mini-batch of size $m$).
In this case one might still be interested in analyzing the privacy loss incurred from releasing such stochastic gradients under the remove/add relation. Note that this setting cannot be implemented using sampling without replacement since under the remove/add relation we cannot a priori guarantee that the input dataset will have at least size $m$ because the size of the dataset must be kept private \citep{DBLP:books/sp/17/Vadhan17}.
Furthermore, one cannot hope to get a meaningful result about the privacy profile of the subsampled mechanism across all inputs sets in $\btwo^{\cU}$; instead the privacy guarantee will depend on the size of the input dataset as shown in the following result.

\begin{theorem}\label{thm:hybrid}
Let $\M' = \cM^{\Swr_m}$.
For any $\varepsilon \geq 0$ and $n \geq 0$ we have
\begin{align*}
\sup_{x \in \btwo_n^{\cU}, x \simeq_r x'} D_{e^{\varepsilon'}}(\cM'(x) \| \cM'(x'))
\leq
\sum_{k=1}^m \binom{m}{k} \left(\frac{1}{n}\right)^{k} \left(1 - \frac{1}{n}\right)^{m-k} \delta_{\cM,k}(\varepsilon) \enspace,
\end{align*}
where $\varepsilon' = \log(1 + (1 - (1-1/n)^m) (e^{\varepsilon} - 1))$.
\end{theorem}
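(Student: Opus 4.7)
The strategy is to instantiate the general pipeline developed in Section~\ref{sec:tools}, much as in the proof of Theorem~\ref{thm:wr}: first compute the overlap of the subsampling distributions, then apply advanced joint convexity (Theorem~\ref{thm:ajc}), and finally bound the residual divergence via a distance-compatible coupling (Theorem~\ref{thm:dcc}). The twist is the mismatch between the outer relation $\simeq_r$ on $\btwo^{\cU}$ and the inner substitute-one relation $\simeq_s$ on $\N_m^{\cU}$ with respect to which the group-privacy profiles $\delta_{\cM,k}$ are stated; the main technical work is to show that the sampling-with-replacement structure still allows one to construct a coupling of the right type.

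Fix $x \in \btwo_n^{\cU}$ and $x' \simeq_r x$, and first treat the removal direction $x' = x \setminus \{v\}$ with $|x'| = n-1$ (which will turn out to be worst). Let $\omega = \Swr_m(x)$ and $\omega' = \Swr_m(x')$ and compute $\sum_y \min\{\omega(y), \omega'(y)\}$ directly: the minimum is nonzero precisely when $y_v = 0$, and a short calculation with multinomial coefficients gives $1-\eta = (1-1/n)^m$ and, crucially, $\omega(y)/(1-\eta) = \omega'(y)$ on that support. Hence the maximal-coupling decomposition collapses to $\omega_0 = \omega' = \omega_1'$, while $\omega_1$ equals $\Swr_m(x)$ conditioned on the event $\{y_v \geq 1\}$. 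Plugging this into Theorem~\ref{thm:ajc} makes the two terms on the right-hand side of~\eqref{eqn:adj-preview} coincide, leaving the clean identity
\begin{align*}
D_{e^{\varepsilon'}}(\omega M \,\|\, \omega' M) \;=\; \eta\, D_{e^{\varepsilon}}(\omega_1 M \,\|\, \omega' M) \enspace.
\end{align*}

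To bound the remaining divergence I would build the following coupling $\pi$ between $\omega_1$ and $\omega'$: sample $Y \sim \omega_1$ (so $Y_v \geq 1$), and obtain $Y'$ by substituting each of the $Y_v$ copies of $v$ in $Y$ with an independent uniform draw from $x'$. Two checks need to be made: (i) conditionally on $Y_v = k \geq 1$, the $m-k$ non-$v$ coordinates of $Y$ are multinomial on $x'$ with parameters $(m-k, 1/(n-1))$, and adding the $k$ fresh uniform-on-$x'$ substitutions yields a multinomial on $x'$ with parameters $(m, 1/(n-1))$, so $Y' \sim \omega'$ regardless of $k$; and (ii) under $\simeq_s$ the distance $d(Y,Y') = Y_v$ matches $d(Y, \supp(\omega'))$, so $\pi$ is $d_Y$-compatible in the sense of Theorem~\ref{thm:dcc}. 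Applying that theorem gives
\begin{align*}
D_{e^{\varepsilon}}(\omega_1 M \,\|\, \omega' M) \;\leq\; \sum_{k=1}^m \omega_1(\{y : y_v = k\})\, \delta_{\cM,k}(\varepsilon) \;=\; \frac{1}{\eta}\sum_{k=1}^m \binom{m}{k}\Bigl(\frac{1}{n}\Bigr)^{k}\Bigl(1-\frac{1}{n}\Bigr)^{m-k} \delta_{\cM,k}(\varepsilon) \enspace,
\end{align*}
and multiplying by $\eta$ yields the claimed inequality in this case.

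For the symmetric direction $x' = x \cup \{u\}$ the same pipeline runs with $1/(n+1)$ in place of $1/n$ and a strictly smaller $\eta' = 1 - (1-1/(n+1))^m$, producing an analogous binomial sum. Since $\log(1+\eta'(e^{\varepsilon}-1)) < \varepsilon'$ and $D_{\alpha}$ is monotonically decreasing in $\alpha$, the divergence at $\varepsilon'$ is dominated by the corresponding bound with parameter $1/(n+1)$, and a term-wise comparison—using monotonicity of $\delta_{\cM,k}$ in $k$ together with the stochastic dominance $\mathrm{Bin}(m,1/n) \succeq \mathrm{Bin}(m,1/(n+1))$—shows that the sum with $1/(n+1)$ is at most the sum with $1/n$ appearing in the statement. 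The main obstacle is really the removal-case bookkeeping: recognizing the exact cancellation $\omega_0 = \omega'$ (so that the outer advanced-joint-convexity step collapses to a single divergence), and then verifying that the natural "replace-the-$v$'s" coupling has the prescribed second marginal on the nose. Once those multinomial identities are in place, the rest is an immediate application of Theorems~\ref{thm:ajc} and~\ref{thm:dcc}.
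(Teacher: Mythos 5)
Your argument is correct and matches the paper's proof essentially step for step: the removal direction $x'=x\setminus\{v\}$ with $\omega_0=\omega_1'=\omega'$, the collapse of advanced joint convexity to the single divergence $\eta D_{e^{\varepsilon}}(\omega_1 M\|\omega' M)$, the replace-each-copy-of-$v$ coupling, and the binomial weights $\omega_1(Y_k)$ fed into Theorem~\ref{thm:dcc}. You actually go slightly further than the paper, which only asserts (by analogy with Theorem~\ref{thm:po}) that removal is the worst-case direction, whereas you supply the monotonicity-in-$\alpha$ and stochastic-dominance comparison that justifies dismissing the addition direction.
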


\paragraph{When the Neighbouring Relation is ``Incompatible''}
Now we consider a simple example where distance-compatible couplings are not available: Poisson subsampling with respect to the substitution relation. Suppose $x, x' \in \btwo_n^{\cU}$ are sets of size $n$ related by the substitution relation $\simeq_s$. Let $\omega = \Spo_\eta(x)$ and $\omega' = \Spo_\eta(x')$ and note that $\TV(\omega,\omega') = \eta$. Let $x_0 = x \cap x'$ and $v = x \setminus x_0$, $v' = x' \setminus x_0$. In this case the factorization induced by the maximal coupling is obtained by taking $\omega_0 = \Spo_\eta(x_0)$, $\omega_1(y \cup \{v\}) = \omega_0(y)$, and $\omega_1'(y \cup \{v'\}) = \omega_0(y)$. Now the support of $\omega_0$ contains sets of sizes between $0$ and $n-1$, while the supports of $\omega_1$ and $\omega_1$ contain sets of sizes between $1$ and $n$. From this observation one can deduce that $\omega_1$ and $\omega_0$ are not $d_{\simeq_s}$-compatible, and $\omega_1$ and $\omega_1'$ are not $d_{\simeq_r}$-compatible.

This argument shows that the method we used to analyze the previous settings cannot be extended to analyze Poisson subsampling under the substitution relation, regardless of whether the privacy profile of the base mechanism is given in terms of the replacement/addition or the substitution relation. This observation is saying that some pairings between subsampling method and neighbouring relation are more natural than others. Nonetheless, even without distance-compatible couplings it is possible to provide privacy amplification bounds for Poisson subsampling with respect to the substitution relation, although the resulting bound is quite cumbersome. The corresponding statement and analysis can be found in the supplementary material.

\section{Lower Bounds}\label{sec:lb}

In this section we show that many of the results given in the previous section are tight by constructing a randomized membership mechanism that attains these upper bounds. For the sake of generality, we state the main construction in terms of tuples instead of multisets. In fact, we prove a general lemma that can be used to obtain tightness results for any subsampling mechanism and any neighbouring relation satisfying two natural assumptions.

For $p \in [0,1]$ let $\cR_p : \{0,1\} \to \cP(\{0,1\})$ be the randomized response mechanism that given $b \in \{0,1\}$ returns $b$ with probability $p$ and $1 - b$ with probability $1 - p$. Note that for $p = (e^\varepsilon + \delta)/(e^{\varepsilon} + 1)$ this mechanism is $(\varepsilon,\delta)$-DP.
Let $\nu_0 = \cR_p(0)$ and $\nu_1 = \cR_p(1)$. For any $\varepsilon \geq 0$ and $p \in [0,1]$ define $\psi_p(\varepsilon) = [p - e^{\varepsilon} (1 - p)]_+$. It is easy to verify that $D_{e^{\varepsilon}}(\nu_0 \| \nu_1) = D_{e^{\varepsilon}}(\nu_1 \| \nu_0) = \psi_p(\varepsilon)$.
Now let $\cU$ be a universe containing at least two elements. For $v \in \cU$ and $p \in [0,1]$ we define the \emph{randomized membership} mechanism $\cM_{v,p}$ that given a tuple $x = (u_1, \ldots, u_n) \in \cU^{\star}$ returns $\cM_{v,p}(x) = \cR_p(\one[v \in x])$.
We say that a subsampling mechanism $\cS : X \to \cP(\cU^{\star})$ defined on some set $X \subseteq \cU^{\star}$ is \emph{natural} if the following two conditions are satisfied: (1) for any $x \in X$ and $u \in \cU$, if $u \in x$ then there exists $y \in \supp(\cS(x))$ such that $u \in y$; (2) for any $x \in X$ and $u \in \cU$, if $u \notin x$ then we have $u \notin y$ for every $y \in \supp(\cS(x))$.

\begin{lemma}\label{lem:naturalS}
Let $X \subseteq \cU^{\star}$ be equipped with a neighbouring relation $\simeq_X$ such that there exist $x \simeq_X x'$ with $v \in x$ and $v \notin x'$. Suppose $\cS : X \to \cP(\cU^{\star})$ is a natural subsampling mechanism and let
$\eta = \sup_{x \simeq_X x'} \TV(\cS(x),\cS(x'))$.
For any $\varepsilon \geq 0$ and $\varepsilon' = \log(1 + \eta (e^{\varepsilon}-1))$ we have
\begin{align*}
\delta_{\cM_{v,p}^{\cS}}(\varepsilon') = \sup_{x \simeq_X x'} D_{e^{\varepsilon'}}(\cM_{v,p}^{\cS}(x) \| \cM_{v,p}^{\cS}(x')) = \eta \psi_p(\varepsilon) \enspace.
\end{align*}
\end{lemma}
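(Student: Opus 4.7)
My plan is to match an upper bound from advanced joint convexity with a matching lower bound coming from the Case~1 pair guaranteed by the hypothesis, both routed through the observation that $\cM_{v,p}^{\cS}$ always outputs a two-point distribution. As a preliminary, I would unfold $\cM_{v,p}^{\cS}(x)$: since $\cM_{v,p}(y) = \cR_p(\one[v \in y])$ only ever produces $\nu_0$ or $\nu_1$, marginalising over $y \sim \cS(x)$ gives $\cM_{v,p}^{\cS}(x) = q_x\,\nu_1 + (1-q_x)\,\nu_0$ with $q_x = \Pr_{y \sim \cS(x)}[v \in y]$. Condition~(2) of naturality then forces $q_x = 0$, so $\cM_{v,p}^{\cS}(x) = \nu_0$, whenever $v \notin x$.

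For the upper bound I would fix an arbitrary pair $x \simeq_X x'$, set $\tilde{\eta} = \TV(\cS(x),\cS(x')) \leq \eta$ and $\tilde{\alpha}' = 1 + \tilde{\eta}(e^\varepsilon - 1)$, and apply Theorem~\ref{thm:ajc} to the maximal coupling of $\cS(x), \cS(x')$. Each of the resulting push-forwards $\mu_0, \mu_1, \mu_1'$ is again a Bernoulli mixture $r\,\nu_1 + (1-r)\,\nu_0$ for some $r \in [0,1]$, so a short two-term calculation on $\{0,1\}$ shows that the inner divergence $D_{e^\varepsilon}(\mu_1 \| (1-\beta)\mu_0 + \beta\mu_1')$ is uniformly bounded by $\psi_p(\varepsilon)$, the bound being attained at the extremal configuration $r_1 = 1$, $r' = 0$. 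Plugging this into advanced joint convexity gives $D_{\tilde{\alpha}'}(\cM_{v,p}^{\cS}(x) \| \cM_{v,p}^{\cS}(x')) \leq \tilde{\eta}\,\psi_p(\varepsilon) \leq \eta\,\psi_p(\varepsilon)$, and monotonicity of $D_\alpha$ in $\alpha$, together with $\alpha' \geq \tilde{\alpha}'$, lifts this to $D_{\alpha'}(\cdots) \leq \eta\,\psi_p(\varepsilon)$ for every pair, establishing the ``$\leq$'' direction.

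For the lower bound I would invoke the hypothesis to pick a pair $x \simeq_X x'$ with $v \in x$ and $v \notin x'$. Then $\mu_{x'} = \nu_0$ and $\mu_x = q_x\,\nu_1 + (1-q_x)\,\nu_0$, so a direct two-term calculation (assuming $p \geq 1/2$ without loss of generality) gives $D_{\alpha'}(\mu_x \| \mu_{x'}) = [q_x(2p-1) - (1-p)\,\eta\,(e^\varepsilon - 1)]_+$, which collapses to $\eta\,\psi_p(\varepsilon)$ precisely when $q_x = \eta$. The main obstacle is then to guarantee such a pair exists: inspecting the maximal coupling of $\cS(x)$ and $\cS(x')$ in this Case~1 scenario shows $q_x \leq \TV(\cS(x),\cS(x')) \leq \eta$, with the first inequality saturated exactly when $\cS(x)$ is pointwise dominated by $\cS(x')$ outside $Y_v = \{y : v \in y\}$. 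The key residual fact, which must be verified beyond conditions~(1)--(2) alone, is that for every natural subsampling scheme studied in Section~\ref{sec:examples} (Poisson, WOR, WR) both the total-variation supremum $\eta$ and this pointwise-domination property are witnessed by some such Case~1 pair, making the lemma applicable as a generic tightness device for Theorems~\ref{thm:po}--\ref{thm:hybrid}.
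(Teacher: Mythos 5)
Your proposal follows the same core route as the paper's proof: reduce everything to two\-/point mixtures $q_x\,\nu_1+(1-q_x)\,\nu_0$ on $\{0,1\}$, apply advanced joint convexity, and evaluate the extremal pair. The upper bound is handled slightly differently: the paper applies Theorem~\ref{thm:ajc} to the decomposition $\mu=(1-\theta)\nu_0+\theta\nu_1$, $\mu'=(1-\theta)\nu_0+\theta\mu_1'$ at the pair\-/dependent level $\log(1+\theta(e^{\varepsilon}-1))$ and then invokes monotonicity of $\theta\mapsto\theta\,\psi_p(\log(1+(e^{\varepsilon'}-1)/\theta))$, whereas you work at level $\tilde{\eta}=\TV(\cS(x),\cS(x'))$ and lift to $\alpha'$ via monotonicity of $D_\alpha$ in $\alpha$. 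Both are valid; yours is arguably cleaner about why pairs with $q_x$ large but $\TV(\cS(x),\cS(x'))$ small cause no trouble.

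The ``residual fact'' you flag in the lower bound is genuine, and you are right that it does not follow from naturality conditions (1)--(2) plus the mere existence of one pair with $v\in x$, $v\notin x'$. Since $q_x\le\TV(\cS(x),\cS(x'))\le\eta$ for every such pair (your pointwise\-/domination criterion for saturation is correct), attaining $\eta\,\psi_p(\varepsilon)$ requires a Case~1 pair with $q_x=\eta$ exactly, and one can construct natural mechanisms where the total\-/variation supremum is achieved only at pairs not separating $v$, so that the stated equality fails. The paper's own proof assumes this silently in the step $\sup_{x\simeq_X x',\,v\notin x'}\theta\,\psi_p(\cdot)=\eta\,\psi_p(\cdot)$. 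For Poisson, WOR and WR subsampling the fact does hold --- in each case $\Pr_{y\sim\cS(x)}[v\in y]$ coincides with the total variation distance computed in Theorems~\ref{thm:po}--\ref{thm:wr} --- so your proof, with that per\-/scheme verification appended, delivers exactly what the tightness theorem needs; as a statement about arbitrary natural subsampling mechanisms, however, the lemma needs the extra hypothesis you identified.
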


We can now apply this lemma to show that the first three results from previous section are tight. This requires specializing from tuples to (multi)sets, and plugging in the definitions of neighbouring relation, subsampling mechanism, and $\eta$ used in each of these theorems.

\begin{theorem}
The mechanism $\cM_{v,p}$ attains the bounds in Theorems~\ref{thm:po},~\ref{thm:wo},~\ref{thm:wr} for any $p$ and $\eta$.
\end{theorem}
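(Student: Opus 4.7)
The plan is to apply Lemma~\ref{lem:naturalS} to each of the three subsampling schemes, verifying in each case that (i) the subsampling mechanism is natural, (ii) the sup-TV parameter $\eta$ coincides with the one appearing in the theorem, and (iii) the right-hand side of the corresponding upper bound (which depends on the (group-)privacy profiles of $\cM_{v,p}$) equals $\eta\,\psi_p(\varepsilon)$, so that the lemma's equality $\delta_{\cM_{v,p}^{\cS}}(\varepsilon') = \eta\,\psi_p(\varepsilon)$ matches the upper bound exactly.

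First I would handle naturality in one go: in all three cases the support of $\cS(x)$ consists of (multi)sets $y$ with $\supp(y)\subseteq x$, giving condition (2) immediately; for condition (1), any $u\in x$ lies in some $y\in\supp(\cS(x))$, e.g.\ the singleton $\{u\}$ for Poisson, any $m$-subset containing $u$ for WOR (using $m\ge 1$), and the constant multiset $(u,\dots,u)$ for WR. Next I would compute $\eta$ in each case. For Poisson sampling with $x'=x\cup\{v\}$ under $\simeq_r$, the distributions $\Spo_\gamma(x)$ and $\Spo_\gamma(x')$ agree on subsets avoiding $v$ up to a factor $1-\gamma$, and a direct calculation gives $\eta=\gamma$. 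For WOR under $\simeq_s$ with $x'=(x\setminus\{v\})\cup\{v'\}$, the subsets of size $m$ avoiding both $v$ and $v'$ have equal mass under the two uniform distributions, and the symmetric difference argument yields $\eta=\binom{n-1}{m-1}/\binom{n}{m}=m/n$. For WR under $\simeq_s$, I observe that conditioned on a multiset $y$ avoiding both $v$ and $v'$, the multinomial densities coincide, so the overlap has total mass $(1-1/n)^m$, giving $\eta=1-(1-1/n)^m$. These are precisely the values in Theorems~\ref{thm:po}, \ref{thm:wo}, \ref{thm:wr}.

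The last step is to evaluate the upper bound of each theorem at $\cM=\cM_{v,p}$. For any $k\ge 1$ one has $\delta_{\cM_{v,p},k}(\varepsilon)=\psi_p(\varepsilon)$: neighbouring inputs (and inputs at path-distance $\le k$) can flip $\one[v\in x]$, and when the indicator does not flip the divergence is zero, while when it does the identity $D_{e^\varepsilon}(\cR_p(0)\|\cR_p(1))=\psi_p(\varepsilon)$ given in the excerpt applies. Plugging this in: Theorem~\ref{thm:po} gives $\gamma\psi_p(\varepsilon)=\eta\psi_p(\varepsilon)$; Theorem~\ref{thm:wo} gives $(m/n)\psi_p(\varepsilon)=\eta\psi_p(\varepsilon)$; and Theorem~\ref{thm:wr} reduces, via
\begin{align*}
\sum_{k=1}^m \binom{m}{k}\Bigl(\tfrac{1}{n}\Bigr)^{k}\Bigl(1-\tfrac{1}{n}\Bigr)^{m-k}\psi_p(\varepsilon) = \Bigl(1-\bigl(1-\tfrac{1}{n}\bigr)^m\Bigr)\psi_p(\varepsilon) = \eta\,\psi_p(\varepsilon),
\end{align*}
to the same expression. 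Combining with Lemma~\ref{lem:naturalS} establishes equality of the bound with $\delta_{\cM_{v,p}^{\cS}}(\varepsilon')$, proving tightness.

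The only genuinely delicate step is the TV computation for WR under substitution, which requires noticing that the multinomial densities on multisets avoiding both differing records depend only on $n$ and $m$, and hence coincide on the overlap; everything else is a direct instantiation of Lemma~\ref{lem:naturalS} together with the trivial group-privacy profile of the randomized membership mechanism.
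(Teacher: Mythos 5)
Your proposal is correct and follows exactly the route the paper intends: the paper's own justification is the one-sentence remark that the theorem follows by applying Lemma~\ref{lem:naturalS} after specializing to the relevant (multi)set representations, neighbouring relations, and subsampling mechanisms, and your write-up simply supplies the details of that instantiation (naturality, the three $\TV$ computations, and the observation that all group-privacy profiles of $\cM_{v,p}$ equal $\psi_p(\varepsilon)$ so each upper bound collapses to $\eta\,\psi_p(\varepsilon)$). All the computed quantities match the values of $\eta$ and the bounds stated in Theorems~\ref{thm:po}, \ref{thm:wo}, and \ref{thm:wr}, so nothing is missing.
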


\section{Conclusions}
We have developed a general method for reasoning about privacy
amplification by subsampling. Our method is applicable to many
different settings, some which have already been studied in the
literature, and others which are new. Technically, our method
leverages two new tools of independent interest: advanced joint
convexity and privacy profiles.
In the future, it would be interesting to study whether our tools can
be extended to give concrete bounds on privacy amplification  for other privacy
notions such as concentrated DP \citep{dwork2016concentrated}, zero-concentrated DP \citep{DBLP:conf/tcc/BunS16}, R{\'e}nyi DP \citep{DBLP:conf/csfw/Mironov17}, and truncated concentrated DP \citep{tcdp}. A good starting point is Theorem~\ref{thm:PPtoMGF} establishing
relations between privacy profiles and moment generating functions of
the privacy loss random variable. An alternative approach is to extend the recent results for R{\'e}nyi DP amplification by subsampling without replacement given in \citep{wang2018subsampled} to more general notions of subsampling and neighbouring relations.

\subsubsection*{Acknowledgments}
This research was initiated during the 2017 Probabilistic Programming Languages workshop hosted by McGill University's Bellairs Research Institute.

\bibliographystyle{plainnat}
\bibliography{main}

\clearpage
\newpage
\appendix

\section{Proofs from Section~\ref{sec:tools}}

\begin{proof}[Proof of Theorem~\ref{thm:ajc}]
It suffices to check that for any $z \in \Z$,
\begin{align*}
[\mu(z) - \alpha' \mu'(z)]_+ =
\eta \left[\mu_1(z) - \alpha \left( (1-\beta) \mu_0(z) + \beta \mu_1'(z) \right)\right]_+ \enspace.
\end{align*}
Plugging this identity in the definition of $D_{\alpha'}$ we get the desired equality
\begin{align*}
D_{\alpha'}(\mu \| \mu')
=  \eta D_{\alpha} (\mu_1 \| (1-\beta) \mu_0 + \beta \mu_1') \enspace.
\end{align*}
\end{proof}

\begin{proof}[Proof of Theorem~\ref{thm:profileLap}]
Suppose $x \simeq_X x'$ and assume without loss of generality that $y = f(x) = 0$ and $y' = f(x) = \Delta > 0$. Plugging the density of the Laplace distribution in the definition of $\alpha$-divergence we get
\begin{align*}
D_{e^{\varepsilon}}(\Lap(b) \| \Delta + \Lap(b)) &= \frac{1}{2 b} \int_{\R} \left[ e^{- \frac{|z|}{b}} - e^{\varepsilon} e^{- \frac{|z - \Delta|}{b}} \right]_+ dz \enspace.
\end{align*}
Now we observe that the quantity inside the integral above is positive if and only if $|z - \Delta| - |z| \geq \varepsilon b$.
Since $||z + \Delta| - |z|| \leq \Delta$, we see that the divergence is zero for $\varepsilon > \Delta/b$.
On the other hand, for $\varepsilon \in [0, \Delta/b]$ we have $\{ z : |z - \Delta| - |z| \geq \varepsilon b \} = (-\infty, (\Delta - \varepsilon b)/2]$. Thus, we have
\begin{align*}
\frac{1}{2 b} \int_{\R} \left[ e^{- \frac{|z|}{b}} - e^{\varepsilon} e^{- \frac{|z - \Delta|}{b}} \right]_+ dz
&=
\frac{1}{2 b} \int_{-\infty}^{(\Delta - \varepsilon b)/2} e^{- \frac{|z|}{b}} dz  -
\frac{e^{\varepsilon}}{2 b} \int_{-\infty}^{(\Delta - \varepsilon b)/2} e^{- \frac{|z-\Delta|}{b}} dz \enspace.
\end{align*}
Now we can compute both integrals as probabilities under the Laplace distribution:
\begin{align*}
\frac{1}{2 b} \int_{-\infty}^{(\Delta - \varepsilon b)/2} e^{- \frac{|z|}{b}} dz
&=
\Pr\left[ \Lap(b) \leq \frac{\Delta - \varepsilon b}{2}\right] \\
&=
1 - \frac{1}{2} \exp\left(\frac{\varepsilon b - \Delta}{2 b}\right) \enspace, \\
\frac{e^{\varepsilon}}{2 b} \int_{-\infty}^{(\Delta - \varepsilon b)/2} e^{- \frac{|z-\Delta|}{b}} dz
&=
e^{\varepsilon} \Pr\left[ \Lap(b) \leq \frac{-\Delta - \varepsilon b}{2}\right] \\
&=
\frac{e^{\varepsilon}}{2} \exp\left(\frac{- \varepsilon b - \Delta}{2 b}\right) \enspace.
\end{align*}
Putting these two quantities together we finally get, for $\varepsilon \leq \Delta/b$:
\begin{align*}
D_{e^{\varepsilon}}(\Lap(b) \| \Delta + \Lap(b)) 
&=
1 - \exp\left(\frac{\varepsilon}{2} - \frac{\Delta}{2b}\right) \enspace.
\end{align*}
\end{proof}

\begin{proof}[Proof of Theorem~\ref{thm:PPtoMGF}]
Let $\varphi = \varphi_{\cM}^{x,x'}$, $L = L_{\cM}^{x,x'}$, $\tphi = \varphi_{\cM}^{x',x}$, and $\tilde{L} = L_{\cM}^{x',x}$.
Recall that for any non-negative random variable $\rvX$ one has $\Ex[\rvX] = \int_0^\infty \Pr[\rvX > t] dt$.
We use this to write the moment generating function of the corresponding privacy loss random variable for $s \geq 0$ as follows:
\begin{align*}
\varphi(s)
&=
\int_0^\infty \Pr[e^{s L} > t] dt \\
&=
\int_0^\infty \Pr\left[\frac{p(\rvX)}{q(\rvX)} > t^{1/s}\right] dt \enspace,
\end{align*}
where $\rvX \sim \mu$, and $p$ and $q$ represent the densities of $\mu$ and $\nu$ with respect to a fixed base measure.
Next we observe the probability inside the integral above can be decomposed in terms of a divergence and a second integral with respect to $q$:
\begin{align*}
\Pr\left[\frac{p(\rvX)}{q(\rvX)} > t^{1/s}\right]
&=
\Pr[p(\rvX) > t^{1/s} q(\rvX)] \\
&=
\Ex_{\mu}\left[ \one[p > t^{1/s} q] \right] \\
&=
\int \one[p(z) > t^{1/s} q(z)] p(z) dz \\
&=
\int \one[p(z) > t^{1/s} q(z)] (p(z) - t^{1/s} q(z)) dz
+ t^{1/s} \int \one[p(z) > t^{1/s} q(z)] q(z) dz \\
&=
\int [p(z) - t^{1/s} q(z)]_+ dz
+ t^{1/s} \int \one[p(z) > t^{1/s} q(z)] q(z) dz \\
&=
D_{t^{1/s}}(\mu \| \mu')
+ t^{1/s} \int \one[p(z) > t^{1/s}q(z)] q(z) dz
\enspace.
\end{align*}
Note the term $D_{t^{1/s}}(\mu \| \mu')$ above is not a divergence when $t^{1/s} < 1$. The integral term above can be re-written as a probability in terms of $\tL$ as follows:
\begin{align*}
\int \one[p(z) > t^{1/s} q(z)] q(z) dz
&=
\Pr[p(\rvY) > t^{1/s} q(\rvY)] \\
&=
\Pr\left[\frac{p(\rvY)}{q(\rvY)} > t^{1/s}\right] \\
&=
\Pr\left[e^{-\tL} > t^{1/s}\right] \enspace,
\end{align*}
where $\rvY \sim \mu'$.
Thus, integrating with respect to $t$ we get an expression for $\varphi(s)$ involving two terms that we will need to massage further:
\begin{align*}
\varphi(s)
&=
\int_{0}^{\infty} D_{t^{1/s}}(\mu \| \mu') dt
+
\int_{0}^{\infty} t^{1/s} \Pr\left[e^{-\tL} > t^{1/s}\right] dt \enspace.
\end{align*}
To compute the second integral in the RHS above we perform the change of variables $dt' = t^{1/s} dt$, which comes from taking $t' = t^{1 + 1/s}/(1 + 1/s)$, or, equivalently, $t = ((1+1/s) t')^{1/(1+1/s)}$. This allows us to introduce the moment generating function of $\tL$ as follows:
\begin{align*}
\int_{0}^{\infty} t^{1/s} \Pr\left[e^{-\tL} > t^{1/s}\right] dt
&=
\int_{0}^{\infty} \Pr\left[e^{-\tL} > ((1+1/s) t')^{1/(s+1)} \right] dt' \\
&=
\int_{0}^{\infty} \Pr\left[\frac{s}{s+1} e^{-(s+1) \tL} > t' \right] dt' \\
&=
\frac{s}{s+1} \Ex\left[e^{-(s+1) \tL}\right] \\
&=
\frac{s}{s+1} \tphi(-s-1) \enspace.
\end{align*}
Putting the derivations above together and substituting $\tphi(-s-1)$ for $\varphi(s)$ we see that
\begin{align*}
\varphi(s) = \frac{s}{s+1} \varphi(s) + \int_{0}^{\infty} D_{t^{1/s}}(\mu \| \mu') dt \enspace,
\end{align*}
or equivalently:
\begin{align*}
\varphi(s) = (s+1) \int_{0}^{\infty} D_{t^{1/s}}(\mu \| \mu') dt \enspace.
\end{align*}

Now we observe that some terms in the integral above cannot be bounded using an $\alpha$-divergence between $\mu$ and $\mu'$, e.g.\ for $t \in (0,1)$ the term $D_{t^{1/s}}(\mu \| \mu')$ is not a divergence.
Instead, using the definition of $D_{t^{1/s}}(\mu \| \mu')$ we can see that these terms are equal to by $1 - t^{1/s} + t^{1/s} D_{t^{-1/s}}(\mu' \| \mu)$, where the last term is now a divergence.
Thus, we split the integral in the expression for $\varphi(s)$ into two parts and obtain
\begin{align*}
\varphi(s) &= (s+1) \int_0^1 \left(1 - {t'}^{1/s} + {t'}^{1/s} D_{{t'}^{-1/s}}(\mu' \| \mu)\right) dt'
+
(s+1) \int_1^{\infty} D_{t^{1/s}}(\mu \| \mu') dt \\
&=
1 + (s+1) \int_0^1 {t'}^{1/s} D_{{t'}^{-1/s}}(\mu' \| \mu) dt' + (s+1) \int_1^{\infty} D_{t^{1/s}}(\mu \| \mu') dt \enspace.
\end{align*}

Finally, we can obtain the desired equation by performing a series of simple changes of variables $t ' = 1/t$, $\alpha = t^{1/s}$, and $\alpha = e^{\varepsilon}$:
\begin{align*}
\varphi(s) &=
1 + (s+1) \int_1^\infty t^{-2 - 1/s} D_{t^{1/s}}(\mu' \| \mu) dt + (s+1) \int_1^{\infty} D_{t^{1/s}}(\mu \| \mu') dt \\
&=
1 + s (s+1) \int_1^\infty \left( \alpha^{s-1} D_{\alpha}(\mu \| \mu') + \alpha^{-s-2}  D_{\alpha}(\mu' \| \mu) \right) d\alpha \\
&=
1 + s (s + 1) \int_{0}^{\infty} \left( e^{s \varepsilon} D_{e^{\varepsilon}}(\mu \| \mu') + e^{- (s+1) \varepsilon} D_{e^{\varepsilon}}(\mu' \| \mu) \right) d\varepsilon
\enspace.
\end{align*}
\end{proof}

\begin{proof}[Proof of Theorem~\ref{thm:dcc}]
The result follows from a few simple observations.
The first observation is that for any coupling $\pi \in C(\nu,\nu')$ and $y \in \supp(\nu')$ we have
\begin{align*}
\sum_{y'} \pi_{y,y'} \delta_{\cM,d(y,y')}(\varepsilon) &\geq \sum_{y'} \pi_{y,y'} \delta_{\cM,d(y,\supp(\nu'))}(\varepsilon)
\\
&= \sum_y \nu_y \delta_{\cM,d(y,\supp(\nu'))}(\varepsilon) \enspace,
\end{align*}
where the first inequality follows from $d(y,y') \geq d(y,\supp(\nu'))$ and the fact that $\delta_{\cM,k}(\varepsilon)$ is monotonically increasing with $k$. Thus the RHS of \eqref{eqn:nocoupling} is always a lower bound for the LHS.
Now let $\pi$ be a $d_Y$-compatible coupling. Since the support of $\pi$ only contains pairs $(y,y')$ such that $d(y,y') = d(y,\supp(\nu'))$, we see that
\begin{align*}
\sum_{y,y'} \pi_{y,y'} \delta_{\cM,d(y,y')}(\varepsilon) = \sum_{y,y'} \pi_{y,y'} \delta_{\cM,d(y,\supp(\nu'))}(\varepsilon) =
\sum_{y} \nu_y \delta_{\cM,d(y,\supp(\nu'))}(\varepsilon) \enspace.
\end{align*}
The result follows.
\end{proof}

\section{Proofs from Section~\ref{sec:examples}}

\begin{proof}[Proof of Theorem~\ref{thm:po}]
Using the tools from Section~\ref{sec:tools}, the analysis is quite straightforward.
Given $x,x' \in \btwo^{\cU}$ with $x \simeq_r x'$, we write $\omega = \Swo_\eta(x)$ and $\omega' = \Swo_\eta(x')$ and note that $\TV(\omega,\omega') = \eta$. Next we define $x_0 = x \cap x'$ and observe that either $x_0 = x$ or $x_0 = x'$ by the definition of $\simeq_r$. Let $\omega_0 = \Spo_\eta(x_0)$. Then the decompositions of $\omega$ and $\omega'$ induced by their maximal coupling have either $\omega_1 = \omega_0$ when $x = x_0$ or $\omega_1' = \omega_0$ when $x' = x_0$. Noting that applying advanced joined convexity in the former case leads to an additional cancellation we see that the maximum will be attained when $x' = x_0$. In this case the distribution $\omega_1$ is given by $\omega_1(y \cup \{v\}) = \omega_0(y)$. This observation yields an obvious $d_{\simeq_r}$-compatible coupling between $\omega_1$ and $\omega_0 = \omega_1'$: first sample $y'$ from $\omega_0$ and then build $y$ by adding $v$ to $y'$.
Since every pair of datasets generated by this coupling has distance one with respect to $d_{\simeq_r}$, Theorem~\ref{thm:dcc} yields the bound $\delta_{\cM'}(\varepsilon') \leq \eta \delta_{\cM}(\varepsilon)$.
\end{proof}

\begin{proof}[Proof of Theorem~\ref{thm:wo}]
The analysis proceeds along the lines of the previous proof.
First we note that for any $x, x' \in \btwo_n^{\cU}$ with $x \simeq_s x'$, the total variation distance between $\omega = \Swo_m(x)$ and $\omega' = \Swo_m(x')$ is given by $\eta = \TV(\omega,\omega') = m/n$.
Applying advanced joint convexity (Theorem~\ref{thm:ajc}) with the decompositions $\omega = (1-\eta) \omega_0 + \eta \omega_1$ and $\omega' = (1-\eta) \omega_0 + \eta \omega_1'$ given by the maximal coupling, the analysis of $D_{e^{\varepsilon'}}(\omega M \| \omega' M)$ reduces to bounding the divergences $D_{e^\varepsilon}(\omega_1 M \| \omega_0 M)$ and $D_{e^\varepsilon}(\omega_1 M \| \omega_1' M)$. In this case both quantities can be bounded by $\delta_{\cM}(\varepsilon)$ by constructing appropriate $d_{\simeq_s}$-compatible couplings and combining \eqref{eqn:anyPi} with Theorem~\ref{thm:dcc}.

We construct the couplings as follows. Suppose $v, v' \in \cU$ are the elements where $x$ and $x'$ differ: $x_v = x'_v + 1$ and $x'_{v'} = x_{v'} + 1$. Let $x_0 = x \cap x'$. Then we have $\omega_0 = \Swo_{m}(x_0)$. Furthermore, writing $\tilde{\omega}_1 = \Swo_{m-1}(x_0)$ we have $\omega_1(y) = \tilde{\omega}_1(y \cap x_0)$ and $\omega'_1(y) = \tilde{\omega}_1(y \cap x_0)$. Using these definitions we build a coupling $\pi_{1,1}$ between $\omega_1$ and $\omega_1'$ through the following generative process: sample $y_0$ from $\tilde{\omega}_1$ and then let $y = y_0 \cup \{v\}$ and $y' \cup \{v'\}$. Similarly, we build a coupling $\pi_{1,0}$ between $\omega_1$ and $\omega_0$ as follows: sample $y_0$ from $\tilde{\omega}_1$, sample $u$ uniformly from $x_0 \setminus y_0$, and then let $y = y_0 \cup \{v\}$ and $y' = y_0 \cup \{u\}$. It is obvious from these constructions that $\pi_{1,1}$ and $\pi_{0,1}$ are both $d_{\simeq_s}$-compatible.
Plugging these observations together, we get $\delta_{\cM'}(\varepsilon') \leq (m/n) \delta_{\cM}(\varepsilon)$.
\end{proof}

\begin{proof}[Proof of Theorem~\ref{thm:wr}]
To bound the privacy profile of the subsampled mechanism $\cM^{\Swr_m}$ on $\btwo_n^{\cU}$ with respect to $\simeq_s$ we start by noting that taking $x, x' \in \btwo_n^{\cU}$, $x \simeq_s x'$, the total variation distance between $\omega = \Swr_m(x)$ and $\omega' = \Swr_m(x')$ is given by $\eta = \TV(\omega,\omega') = 1 - (1 - 1/n)^m$.
To define appropriate mixture components for applying the advanced joint composition property we write $v$ and $v'$ for the elements where $x$ and $x'$ differ and $x_0 = x \cap x'$ for the common part between both datasets. Then we have $\omega_0 = \Swr_m(x_0)$. Furthermore, $\omega_1$ is the distribution obtained from sampling $\tilde{y}$ from $\tilde{\omega}_1 = \Swr_{m-1}(x)$ and building $y$ by adding one occurrence of $v$ to $\tilde{y}$.
Similarly, sampling $y'$ from $\omega_1'$ corresponds to adding $v'$ to a multiset sampled from $\Swr_{m-1}(x')$.

Now we construct appropriate distance-compatible couplings. First we let $\pi_{1,1} \in \cP(\N_m^{\cU} \times \N_m^{\cU})$ be the distribution given by sampling $y$ from $\omega_1$ as above and outputting the pair $(y,y')$ obtained by replacing each $v$ in $y$ by $v'$. It is immediate from this construction that $\pi_{1,1}$ is a $d_{\simeq_s}$-compatible coupling between $\omega_1$ and $\omega_1'$. Furthermore, using the notation from Theorem~\ref{thm:dcc} and the construction of the maximal coupling, we see that for $k \geq 1$:
\begin{align*}
\omega_1(Y_k) = \frac{\omega(Y_k) - (1-\eta) \omega_0(Y_k)}{\eta}
&=
\frac{\Pr_{y \sim \omega}[y_v = k]}{\eta} =
\frac{1}{\eta} \binom{m}{k} \left(\frac{1}{n}\right)^{k} \left(1 - \frac{1}{n}\right)^{m-k}
\enspace,
\end{align*}
where we used $\omega_0(Y_k) = 0$ since $\omega_0$ is supported on multisets that do not include $v$.
Therefore, the distributions $\mu_1 = \omega_1 M$ and $\mu_1' = \omega_1' M$ satisfy
\begin{align}\label{eqn:wrm1m1}
\eta D_{e^{\varepsilon}}(\mu_1 \| \mu_1') \leq \sum_{k=1}^m
\binom{m}{k} \left(\frac{1}{n}\right)^{k} \left(1 - \frac{1}{n}\right)^{m-k}
\delta_{\cM,k}(\varepsilon) \enspace.
\end{align}
On the other hand, we can build a $d_{\simeq_s}$-compatible coupling between $\omega_1$ and $\omega_0$ by first sampling $y$ from $\omega_1$ and then replacing each occurrence of $v$ by an element picked uniformly at random from $x_0$. Again, this shows that $D_{e^{\varepsilon}}(\mu_1 \| \mu_0)$ is upper bounded by the right hand side of \eqref{eqn:wrm1m1}.

Therefore, we conclude that
\begin{align*}
\delta_{\cM'}(\varepsilon') \leq
\sum_{k=1}^m
\binom{m}{k} \left(\frac{1}{n}\right)^{k} \left(1 - \frac{1}{n}\right)^{m-k}
\delta_{\cM,k}(\varepsilon) \enspace.
\end{align*}
\end{proof}

\begin{proof}[Proof of Theorem~\ref{thm:hybrid}]
Suppose $x \simeq_r x'$ with $|x| = n$ and $|x'| = n - 1$. This is the worst-case direction for the neighbouring relation like in the proof of Theorem~\ref{thm:po}. Let $\omega = \Swr_m(x)$ and $\omega = \Swr_m(x')$. We have $\eta = \TV(\omega,\omega') = 1 - (1- 1/n)^m$, and the factorization induced by the maximal coupling has $\omega_0 = \omega_1' = \omega'$ and $\omega_1$ is given by first sampling $\tilde{y}$ from $\Swr_{m-1}(x)$ and then producing $y$ by adding to $\tilde{y}$ a copy of the element $v$ where $x$ and $x'$ differ. This definition of $\omega_1$ suggests the following coupling between $\omega_1$ and $\omega_0$: first sample $y$ from $\omega_1$, then produce $y'$ by replacing each copy of $v$ with a element from $x'$ sampled independently and uniformly. By construction we see that this coupling is $d_{\simeq_s}$-compatible, so we can apply Theorem~\ref{thm:dcc}. Using the same argument as in the proof of Theorem~\ref{thm:wr} we see that $\eta \omega_1(Y_k) = \binom{m}{k} (1/n)^{k} (1 - 1/n)^{m-k}$. Thus, we finally get
\begin{align*}
D_{e^{\varepsilon'}}(\cM^{\Swr_m}(x) \| \cM^{\Swr_m}(x')) &= \eta D_{e^\varepsilon}(\omega_1 M  \| \omega_0 M) \\
&\leq \eta \sum_{k=1}^m \omega_1(Y_k) \delta_{\cM,k}(\varepsilon) \\
&= \sum_{k=1}^m \binom{m}{k} \left(\frac{1}{n}\right)^{k} \left(1 - \frac{1}{n}\right)^{m-k} \delta_{\cM,k}(\varepsilon) \enspace.
\end{align*}
\end{proof}

\begin{theorem}\label{thm:wrong}
Let $\cM : \btwo^{\cU} \to \cP(Z)$ be a mechanism with privacy profile $\delta_{\cM}$ with respect to $\simeq_s$. Then the privacy profile with respect of $\simeq_s$ of the subsampled mechanism $\cM' = \cM^{\Spo_\gamma} : \btwo_n^{\cU} \to \cP(Z)$ on datasets of size $n$ satisfies the following:
\begin{align*}
\delta_{\cM'}(\varepsilon') \leq
\gamma \beta \delta_{\cM}(\varepsilon) +
\gamma (1 - \beta) \left( \sum_{k=1}^{n-1} \tilde{\gamma}_k \delta_{\cM}(\varepsilon_k) 
+ \tilde{\gamma}_n \right) \enspace,
\end{align*}
where $\varepsilon' = \log(1 + \gamma (e^{\varepsilon} - 1))$, $\beta = e^{\varepsilon'}/e^{\varepsilon}$, $\varepsilon_k = \varepsilon + \log(\frac{\gamma}{1 - \gamma} (\frac{n}{k}-1))$,
and $\tilde{\gamma}_k = \binom{n-1}{k-1} \gamma^{k-1} (1-\gamma)^{n-k}$.
\end{theorem}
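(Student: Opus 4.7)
The plan is to follow the advanced joint convexity recipe used for the other amplification theorems and then handle the one term for which no distance-compatible coupling exists. Fix $x \simeq_s x'$ in $\btwo_n^{\cU}$ and write $x_0 = x \cap x'$, $v \in x \setminus x_0$, $v' \in x' \setminus x_0$. Let $\omega = \Spo_\gamma(x)$, $\omega' = \Spo_\gamma(x')$, $\mu = \omega M$, $\mu' = \omega' M$. A direct calculation yields $\TV(\omega,\omega') = \gamma$ and the maximal-coupling decomposition $\omega = (1-\gamma)\omega_0 + \gamma\omega_1$, $\omega' = (1-\gamma)\omega_0 + \gamma\omega_1'$, with $\omega_0 = \Spo_\gamma(x_0)$ and $\omega_1$ (resp.\ $\omega_1'$) obtained by sampling $y_0 \sim \omega_0$ and returning $y_0 \cup \{v\}$ (resp.\ $y_0 \cup \{v'\}$). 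Theorem~\ref{thm:ajc} followed by standard joint convexity then gives
\[ D_{e^{\varepsilon'}}(\mu \| \mu') \leq \gamma(1-\beta)\, D_{e^\varepsilon}(\mu_1 \| \mu_0) + \gamma\beta\, D_{e^\varepsilon}(\mu_1 \| \mu_1'), \]
with $\mu_\star = \omega_\star M$ and $\beta = e^{\varepsilon'-\varepsilon}$. The second term on the right is straightforward: reusing the same $y_0$ on both sides produces a coupling of $\omega_1$ and $\omega_1'$ whose pairs $(y_0 \cup \{v\}, y_0 \cup \{v'\})$ are always related by $\simeq_s$, so $D_{e^\varepsilon}(\mu_1 \| \mu_1') \leq \delta_{\cM}(\varepsilon)$.

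The difficulty will be in bounding $D_{e^\varepsilon}(\mu_1 \| \mu_0)$, because $\omega_1$ and $\omega_0$ are supported on subsets of different sizes and therefore admit no $d_{\simeq_s}$-compatible coupling. My plan is to regroup contributions by the sample size. Writing $\omega_1^{(k)}$ for the uniform distribution on size-$k$ subsets of $x$ containing $v$ and $\omega_0^{(k)}$ for the uniform distribution on size-$k$ subsets of $x_0$, one has $\omega_1 = \sum_{k=1}^{n} \tilde\gamma_k\, \omega_1^{(k)}$ and $\omega_0 = \sum_{k=0}^{n-1} \tilde\gamma_{k+1}\, \omega_0^{(k)}$. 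Expanding $\mu_1 - e^\varepsilon \mu_0$ size-by-size, the size-$0$ contribution $-e^\varepsilon \tilde\gamma_1 \mu_0^{(0)}$ is non-positive (and so can be discarded), the size-$n$ contribution is $\tilde\gamma_n \mu_1^{(n)}$ (which has no $\mu_0$ partner), and for each $k \in \{1,\ldots,n-1\}$ the paired contribution factors as
\[ \tilde\gamma_k\, \mu_1^{(k)} - e^\varepsilon \tilde\gamma_{k+1}\, \mu_0^{(k)} = \tilde\gamma_k\bigl(\mu_1^{(k)} - e^{\varepsilon_k}\, \mu_0^{(k)}\bigr), \]
since $\tilde\gamma_{k+1}/\tilde\gamma_k = \gamma(n-k)/(k(1-\gamma)) = e^{\varepsilon_k - \varepsilon}$. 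This algebraic re-pairing --- trading the natural but mismatched pair $(\mu_1^{(k)},\mu_0^{(k-1)})$ for the same-size pair $(\mu_1^{(k)},\mu_0^{(k)})$ at the cost of a shifted privacy parameter --- is the step I expect to be the main obstacle to spot.

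Given this decomposition, the subadditivity $[a+b]_+ \leq [a]_+ + [b]_+$ applied to the integrand of the divergence yields
\[ D_{e^\varepsilon}(\mu_1 \| \mu_0) \leq \tilde\gamma_n + \sum_{k=1}^{n-1} \tilde\gamma_k\, D_{e^{\varepsilon_k}}(\mu_1^{(k)} \| \mu_0^{(k)}). \]
For each $k \in \{1,\ldots,n-1\}$ the coupling that draws $y_0$ uniformly from the size-$(k-1)$ subsets of $x_0$ and $u$ uniformly from $x_0 \setminus y_0$, outputting $(y_0 \cup \{v\}, y_0 \cup \{u\})$, is a $d_{\simeq_s}$-compatible coupling between $\omega_1^{(k)}$ and $\omega_0^{(k)}$; a short combinatorial check confirms the marginal of $y_0 \cup \{u\}$ is the claimed uniform distribution. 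Theorem~\ref{thm:dcc} then gives $D_{e^{\varepsilon_k}}(\mu_1^{(k)} \| \mu_0^{(k)}) \leq \delta_{\cM}(\varepsilon_k)$, and plugging the resulting bounds on $D_{e^\varepsilon}(\mu_1 \| \mu_0)$ and $D_{e^\varepsilon}(\mu_1 \| \mu_1')$ back into the opening AJC inequality reproduces exactly the stated expression.
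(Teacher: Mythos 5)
Your proposal is correct and follows essentially the same route as the paper's proof: advanced joint convexity with the maximal-coupling decomposition, a size-stratified decomposition of $\omega_1$ and $\omega_0$, re-pairing the same-size components at the shifted parameters $\varepsilon_k$ (your direct factoring of the integrand is exactly the paper's scaling identity $D_{\alpha}(a \nu \| b \nu') = a D_{\alpha b/a}(\nu \| \nu')$ for non-negative measures), and the leftover $\tilde{\gamma}_n$ mass handled as an unpaired term. The combinatorial verifications (that $\gamma_k = \tilde{\gamma}_{k+1}$, the weight ratio equals $e^{\varepsilon_k - \varepsilon}$, and the couplings are $d_{\simeq_s}$-compatible) all check out.
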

\begin{proof}[Proof of Theorem~\ref{thm:wrong}]
Suppose $x, x' \in \btwo_n^{\cU}$ are sets of size $n$ related by the substitution relation $\simeq_s$. Let $\omega = \Spo_\eta(x)$ and $\omega' = \Spo_\eta(x')$ and note that $\TV(\omega,\omega') = \eta$. Let $x_0 = x \cap x'$ and $v = x \setminus x_0$, $v' = x' \setminus x_0$. In this case the factorization induced by the maximal coupling is obtained by taking $\omega_0 = \Spo_\eta(x_0)$, $\omega_1(y \cup \{v\}) = \omega_0(y)$, and $\omega_1'(y \cup \{v'\}) = \omega_0(y)$. 
From this factorization we see it is easy to construct a coupling $\pi_{1,1}$ between $\omega_1$ and $\omega_1'$ that is $d_{\simeq_s}$-compatible. Therefore we have $D_{e^\varepsilon}(\omega_1 M \| \omega_1' M) \leq \delta_{\cM}(\varepsilon)$.

Since we have already identified that no $d_{\simeq_s}$-compatible coupling between $\omega_1$ and $\omega_0$ can exist, we shall further decompose these distributions ``by hand''. Let $\nu_k = \Swo_k(x_0)$ and note that $\nu_k$ corresponds to the distribution $\omega_0$ conditioned on $|y|=k$. Similarly, we define $\tilde{\nu}_k$ as the distribution corresponding to sampling $\tilde{y}$ from $\Swo_{k-1}(x_0)$ and outputting the set $y$ obtained by adding $v$ to $\tilde{y}$. Then $\tilde{\nu}_k$ equals the distribution of $\omega_1$ conditioned on $|y|=k$. Now we write $\gamma_k = \Pr_{y \sim \omega_0}[|y|=k] = \binom{n-1}{k} \gamma^k (1-\gamma)^{n-1-k}$ and $\tilde{\gamma}_k = \Pr_{y \sim \omega_1}[|y|=k] = \binom{n-1}{k-1} \gamma^{k-1} (1-\gamma)^{n-k}$. With these notations we can write the decompositions
$\omega_0 = \sum_{k=0}^{n-1} \gamma_k \nu_k$ and $\omega_1 = \sum_{k=1}^{n} \tilde{\gamma}_k \tilde{\nu}_k$.
Further, we observe that the construction of $\tilde{\nu}_k$ and $\nu_k$ shows there exist $d_{\simeq_s}$-compatible couplings between these pairs of distributions when $1 \leq k \leq n-1$, leading to $D_{e^{\varepsilon}}(\tilde{\nu}_k M \| \nu_k M) \leq \delta_{\cM}(\varepsilon)$.
To exploit this fact we first write
\begin{align*}
D_{e^{\varepsilon}}(\omega_1 M \| \omega_0 M)
&=
D_{e^{\varepsilon}}\left(
\sum_{k=1}^{n-1} \tilde{\gamma}_k \tilde{\nu}_k M + \tilde{\gamma}_n \tilde{\nu}_n M \middle\|
\gamma_0 \nu_0 M + \sum_{k=1}^{n-1} {\gamma}_k {\nu}_k M \right) \enspace.
\end{align*}
Now we use that $\alpha$-divergences can be applied to arbitrary non-negative measures, which are not necessarily probability measures, using the same definition we have used so far. Under this relaxation, given non-negative measures $\nu_i, \nu_i'$, $i = 1,2$, on a measure space $Z$ we have $D_{\alpha}(\nu_1 + \nu_2 \| \nu_1' + \nu_2') \leq D_{\alpha}(\nu_1 \| \nu_1') +  D_{\alpha}(\nu_2 \| \nu_2')$, $D_{\alpha}(a \nu_1 \| b \nu_2) = a D_{\alpha b / a}(\nu_1 \| \nu_2)$ for $a \geq 0$ and $b > 0$, and $D_{\alpha}(\nu_1 \| 0 ) = \nu_1(Z)$. Using these properties on the decomposition above we see that
\begin{align*}
D_{e^{\varepsilon}}(\omega_1 M \| \omega_0 M)
&\leq
\sum_{k=1}^{n-1} \tilde{\gamma}_k D_{e^{\varepsilon_k}}(\tilde{\nu}_k M \| {\nu}_k M) 
+ \tilde{\gamma}_n \\
&\leq
\sum_{k=1}^{n-1} \tilde{\gamma}_k \delta_{\cM}(\varepsilon_k) 
+ \tilde{\gamma}_n
\enspace,
\end{align*}
where $e^{\varepsilon_k} = (\gamma_k / \tilde{\gamma}_k) e^{\varepsilon} = (\gamma / (1-\gamma)) (n/k -1) e^{\varepsilon}$.
\end{proof}

\section{Proofs from Section~\ref{sec:lb}}

\begin{proof}[Proof of Lemma~\ref{lem:naturalS}]
We start by observing that for any $x \in X$ the distribution $\mu = \cM_{v,p}^{\cS}(x)$ must be a mixture $\mu = (1-\theta) \nu_0 + \theta \nu_1$ for some $\theta \in [0,1]$. This follows from the fact that there are only two possibilities $\nu_0$ and $\nu_1$ for $\cM_{v,p}(y)$ depending on whether $v \notin y$ or $v \in y$. Similarly, taking $x \simeq_X x'$ we get $\mu' = \cM_{v,p}^{\cS}(x')$ with $\mu' = (1-\theta') \nu_0 + \theta' \nu_1$ for some $\theta' \in [0,1]$. Assuming (without loss of generality) $\theta \geq \theta'$, we use the advanced joint convexity property of $D_{\alpha}$ to get
\begin{align*}
D_{e^{\varepsilon'}}(\mu \| \mu') &=
\theta D_{e^{\varepsilon}}(\nu_1 \| (1 - \theta' /\theta) \nu_0 + (\theta'/\theta) \nu_1) \\
&\leq
\theta (1 - \theta' /\theta) D_{e^{\varepsilon}}(\nu_1 \| \nu_0) = (\theta - \theta') \psi_p(\varepsilon) \leq \theta \psi_p(\varepsilon)\enspace,
\end{align*}
where $\varepsilon' = \log(1 + \theta(e^{\varepsilon}-1))$ and $\beta = e^{\varepsilon'}/e^{\varepsilon}$, and the inequality follows from joint convexity. Now note the inequalities above are in fact equalities when $\theta' = 0$, which is equivalent to the fact $v \notin x'$ because $\cS$ is a natural subsampling mechanism. Thus, observing that the function $\theta \mapsto \theta \psi_p(\log(1 + (e^{\varepsilon'}-1)/\theta))$ is monotonically increasing, we get
\begin{align*}
\sup_{x \simeq_X x'} D_{e^{\varepsilon'}}(\cM_{v,p}^{\cS}(x) \| \cM_{v,p}^{\cS}(x')) &=
\sup_{x \simeq_X x', v \notin x'} \theta \psi_p(\log(1 + (e^{\varepsilon'}-1)/\theta)) \\
&= \eta \psi_p(\log(1 + (e^{\varepsilon'}-1)/\eta)) = \eta \psi_p(\varepsilon) \enspace.
\end{align*}
\end{proof}

\section{Plots of Privacy Profiles}\label{sec:plots}

\begin{figure}[h]
\begin{center}
\begin{subfigure}[b]{0.45\textwidth}
\centering
\includegraphics[width=.72\textwidth]{./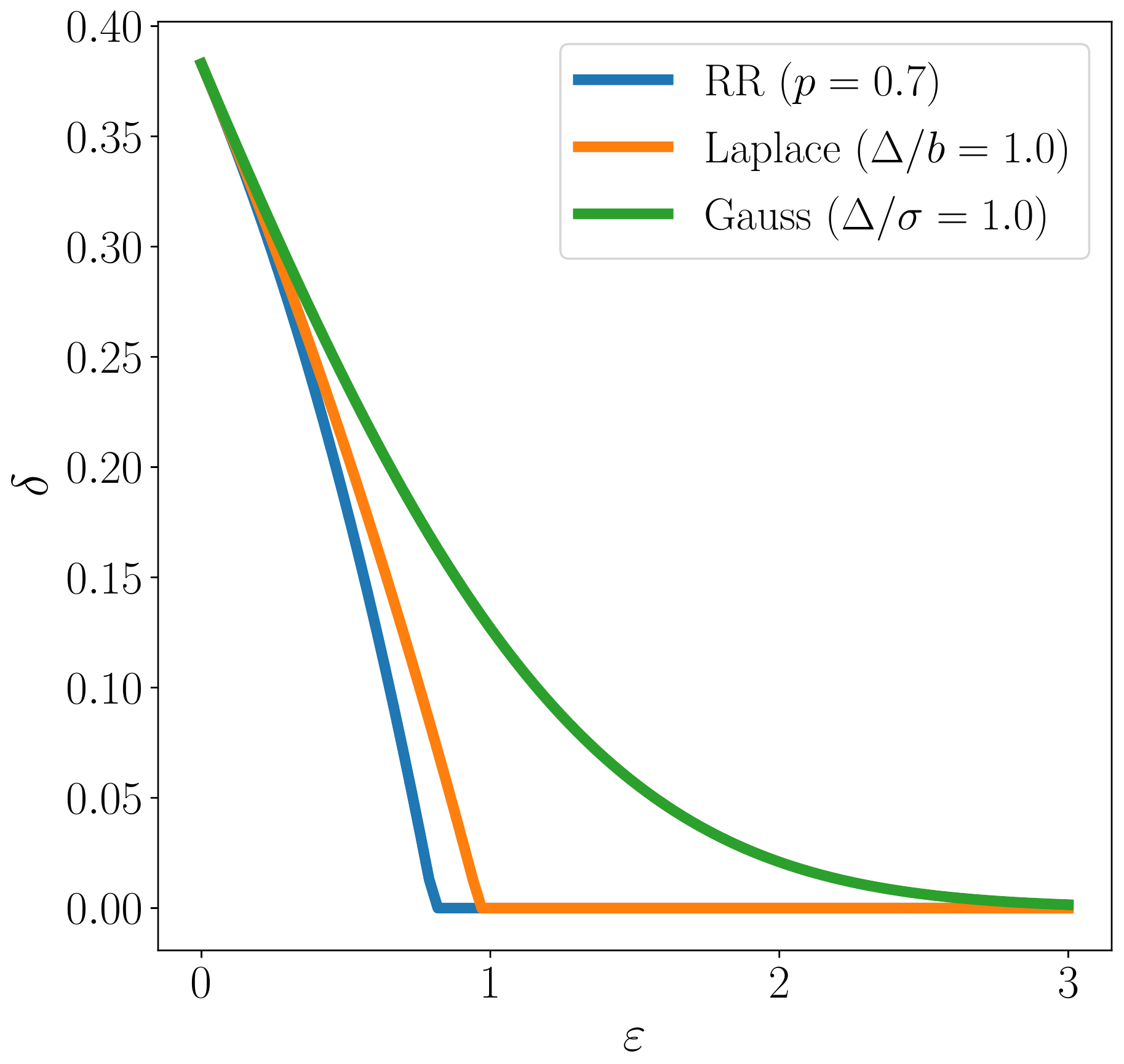}
\caption{\footnotesize Privacy profiles with mechanisms calibrated to provide the same $\delta$ at $\varepsilon = 0$. Profile expressions are given in Section~\ref{sec:lb} (RR), Theorem~\ref{thm:profileLap} (Laplace), and Theorem~\ref{thm:profileGau} (Gauss).}
\end{subfigure}\\[1em]
\begin{subfigure}[b]{0.45\textwidth}
\centering
\includegraphics[width=.72\textwidth]{./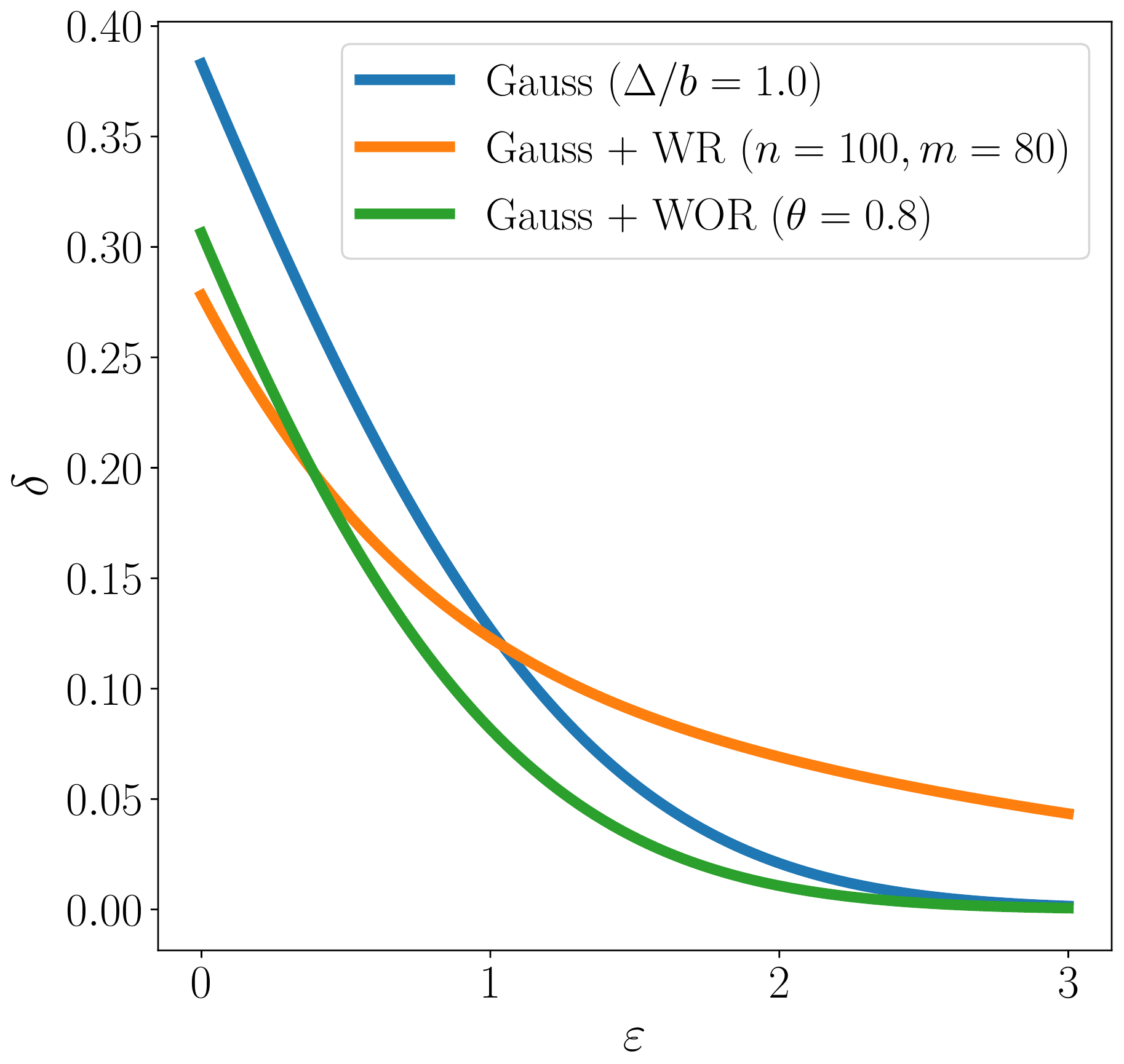}
\caption{\footnotesize Subsampled Gaussian mechanism. Comparison between sampling without replacement (Theorem~\ref{thm:wo}) and with replacement (Theorem~\ref{thm:wr}, with white-box group privacy), both with the same subsampled dataset sizes.}
\end{subfigure}
\hspace*{2em}
\begin{subfigure}[b]{0.45\textwidth}
\centering
\includegraphics[width=.72\textwidth]{./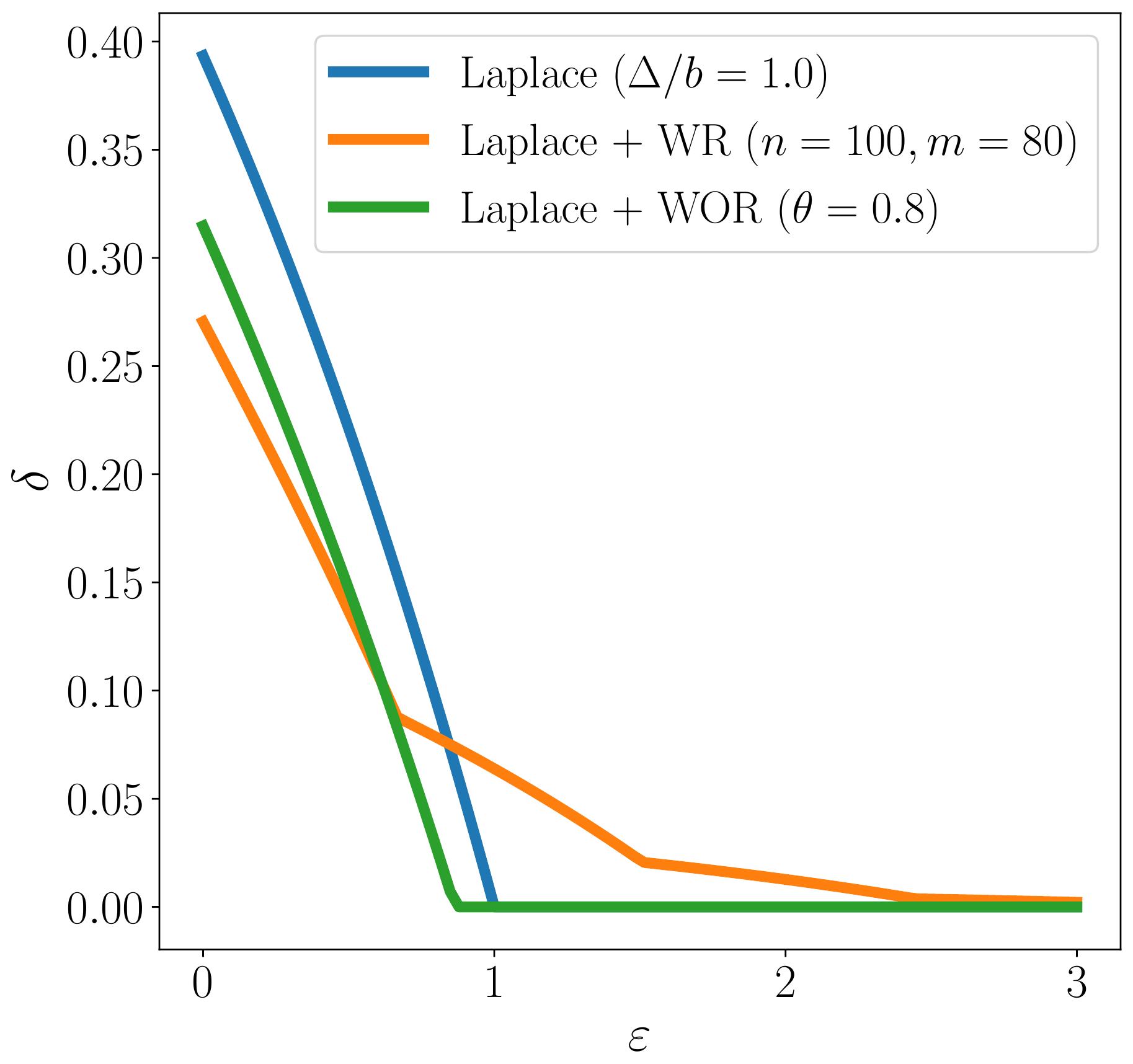}
\caption{\footnotesize Subsampled Laplace mechanism. Comparison between sampling without replacement (Theorem~\ref{thm:wo}) and with replacement (Theorem~\ref{thm:wr}, with white-box group privacy), both with the same subsampled dataset sizes.}
\end{subfigure}\\[1em]
\begin{subfigure}[b]{0.45\textwidth}
\centering
\includegraphics[width=.72\textwidth]{./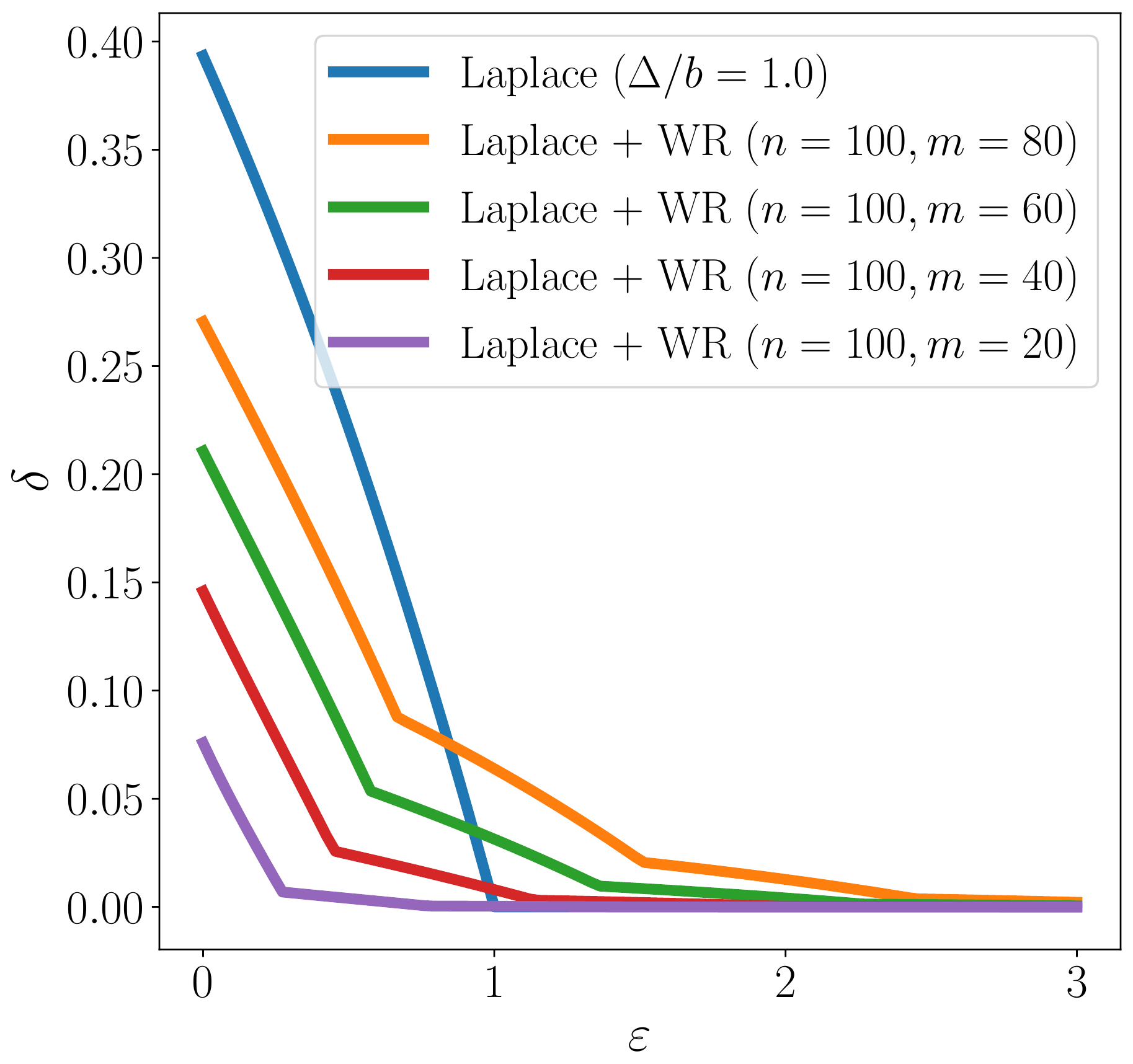}
\caption{\footnotesize Subsampled Laplace mechanism. Impact of group-privacy effect in sampling with replacement (white-box group privacy).}
\end{subfigure}
\hspace*{2em}
\begin{subfigure}[b]{0.45\textwidth}
\centering
\includegraphics[width=.72\textwidth]{./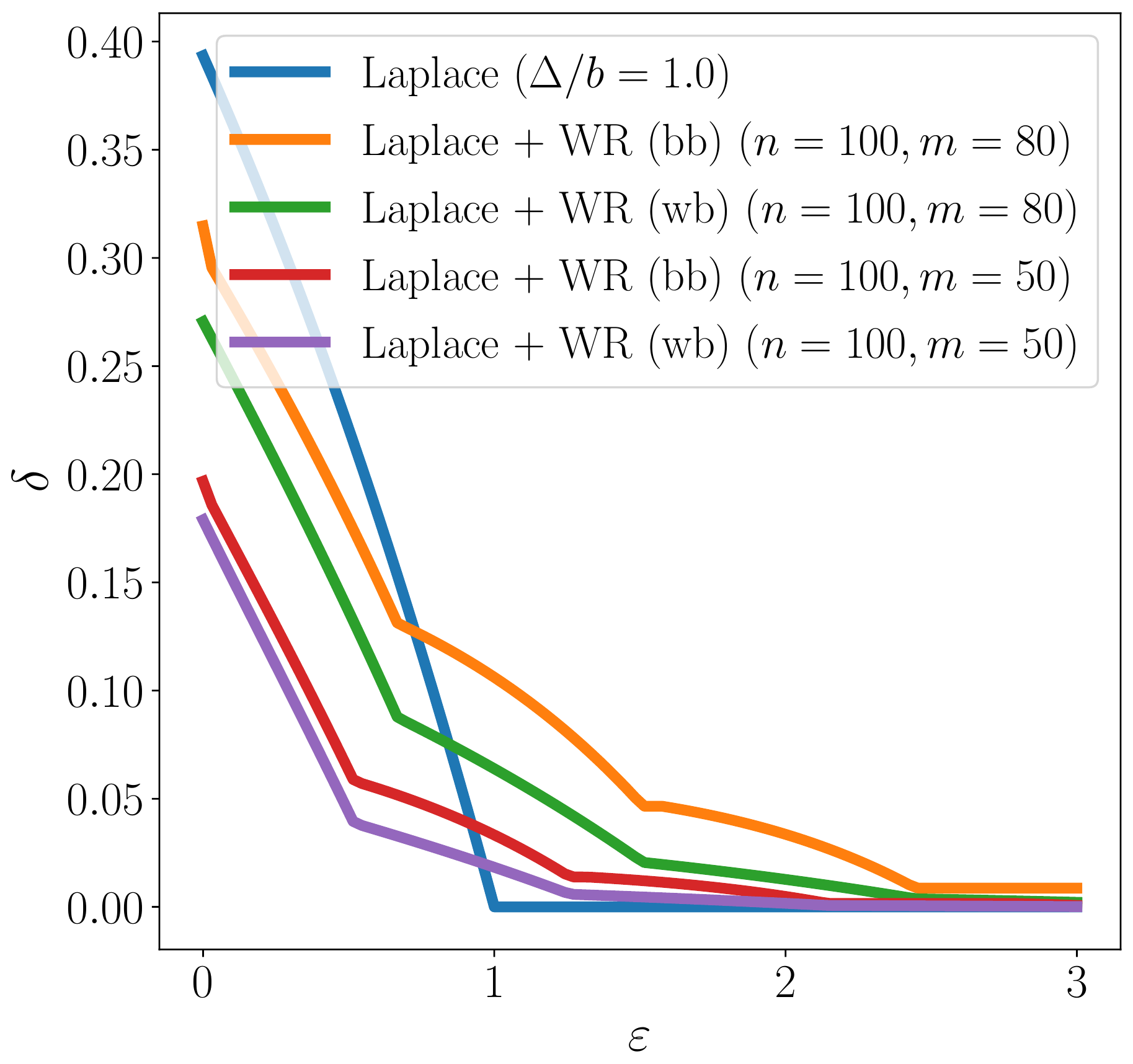}
\caption{\footnotesize Subsampled Laplace mechanism. Impact of white-box vs.\ black-box group-privacy in sampling with replacement.}
\end{subfigure}
\caption{Plots of privacy profiles. Results illustrate the notion of privacy profile and the different subsampling bounds derived in the paper.}
\label{fig:profiles}
\end{center}
\end{figure}

 \end{document}